\documentclass[ejs]{imsart}


\RequirePackage[OT1]{fontenc}
\RequirePackage{amsthm,amsmath,amssymb,subcaption,graphicx,epstopdf}
\RequirePackage[numbers]{natbib}
\RequirePackage[colorlinks,citecolor=blue,urlcolor=blue]{hyperref}

\pubyear{2017}
\volume{11}
\firstpage{50}
\lastpage{77}

\startlocaldefs
\numberwithin{equation}{section}
\theoremstyle{plain}
\newtheorem{theorem}{Theorem}[section]

\theoremstyle{definition}
\newtheorem{remark}{Remark}	

\theoremstyle{definition}
\newtheorem{assump}{Assumption}

\theoremstyle{definition}
\newtheorem{lemma}{Lemma}

\theoremstyle{definition}
\newtheorem{corollary}{Corollary}

\theoremstyle{definition}

\endlocaldefs

\usepackage{mathtools}

\DeclarePairedDelimiter\floor{\lfloor}{\rfloor}
\def\tr{\text{tr}}
\def\rank{\text{rank}}
\def\eps{\varepsilon}
\def\Var{\textrm{Var}}
\begin{document}

\begin{frontmatter}
\title{Estimation of low rank density matrices by Pauli measurements}
\runtitle{Low Rank Density Matrix Estimation}
\runauthor{Dong Xia}

\begin{aug}
\author{\fnms{Dong Xia} \ead[label=e1]{dongxia@stat.wisc.edu}}

\address{1300 University Ave, Madison, WI 53706, USA.\\
\printead{e1}}


\affiliation{Georgia Institute of Technology}
\end{aug}

\begin{abstract}
Density matrices are positively semi-definite Hermitian matrices with unit trace that describe the states of quantum systems. Many quantum systems of physical interest can be represented as high-dimensional low rank density matrices. A popular problem in {\it quantum state tomography} (QST) is to estimate the unknown low rank density matrix of a quantum system by conducting Pauli measurements. Our main contribution is twofold. First, we establish the minimax lower bounds in Schatten $p$-norms with $1\leq p\leq +\infty$ for low rank density matrices estimation by Pauli measurements. In our previous paper \cite{koltchinskii15optimal}, these minimax lower bounds are proved under the trace regression model with Gaussian noise and the noise is assumed to have common variance. In this paper, we prove these bounds under the Binomial observation model which meets the actual model in QST.

Second, we study the Dantzig estimator (DE) for estimating the unknown low rank density matrix under the Binomial observation model by using Pauli measurements. In our previous papers \cite{koltchinskii15optimal} and \cite{Xia2016Estimation}, we studied the least squares estimator and the projection estimator, where we proved the optimal convergence rates for the least squares estimator in Schatten $p$-norms with $1\leq p\leq 2$ and, under a stronger condition, the optimal convergence rates for the projection estimator in Schatten $p$-norms with $1\leq p\leq +\infty$. In this paper, we show that the results of these two distinct estimators can be simultaneously obtained by the Dantzig estimator. Moreover, better convergence rates in Schatten norm distances can be proved for Dantzig estimator under conditions weaker than those needed in \cite{koltchinskii15optimal} and \cite{Xia2016Estimation}.
When the objective function of DE is replaced by the negative von Neumann entropy, we obtain sharp convergence rate in Kullback-Leibler divergence. 
\end{abstract}

\begin{keyword}[class=MSC]
\kwd[Primary ]{62J99,81P50}
\kwd[; secondary ]{62H12}
\end{keyword}

\begin{keyword}
\kwd{Quantum state tomography}
\kwd{low rank}
\kwd{Schatten $p$-norms}
\kwd{density matrix}
\end{keyword}

\end{frontmatter}

\section{Introduction}
\label{intro}
Let ${\mathbb H}_m$ be the set of all Hermitian matrices: ${\mathbb H}_m:=\{A\in {\mathbb C}^{m\times m}:A=A^{\ast}\}$ with $A^{\ast}$ denoting the adjoint matrix of $A.$
For $A\in\mathbb{H}_m,$ $\text{tr}(A)$ denotes the trace of $A$ and $A\succcurlyeq 0$ means that $A$ is positively semi-definite (i.e., all its eigenvalues are nonnegative). Let $\mathcal{S}_m:=\left\{S\in\mathbb{H}_m: 
S\succcurlyeq 0, \textrm{tr}(S)=1\right\}$ be the set of all positively semi-definite $m\times m$ Hermitian matrices of unit trace which are called {\it density matrices}.  

In quantum mechanics, the state of a quantum system is often characterized (or at least approximated) by a density matrix $\rho\in\mathcal{S}_m$. The goal of {\it quantum state tomography} (QST) is to estimate
the unknown state $\rho$ based on a number of measurements conducted on the systems prepared in state $\rho$ 
(see \cite{gross2010quantum}, \cite{gross2011recovering}, \cite{koltchinskii2011neumann}, \cite{caioptimal} and references therein). The difficulty of QST is that the dimension
$m$ grows exponentially as the system size increases. For instance, for a quantum system consisting of $b$ qubits, its density matrix $\rho\in\mathcal{S}_m$ with $m=2^b$.
Fortunately, density matrices of many important quantum states (for instance, pure states) are of low rank which can significantly reduce the complexity of the estimation problem.
In this paper, we focus on the following set of low rank density matrices,
\begin{equation}\label{bpd}
 \mathcal{S}_{m,r}:=\{S\in\mathcal{S}_m: \rank(S)\leq r\}.
\end{equation}
Typically, the statistical model of QST is as follows.
Given an observable $A\in {\mathbb H}_m$ (in this paper, $A$ represents a Pauli matrix) with spectral representation $A=\sum_{j=1}^{m'}\lambda_j P_j,$ where $m'\leq m,$ $\lambda_j$ being the distinct eigenvalues of $A$ and $P_j$
being the corresponding mutually orthogonal eigenprojectors, the outcome of a measurement 
of $A$ on the system prepared in state $\rho$ is a random variable $O$ taking values $\lambda_j$
with probability $\textrm{tr}(\rho P_j).$ Put it differently, $\mathbb{P}(O=\lambda_j)=\langle\rho,P_j \rangle, j=1,2,\ldots,m$.
Then, it is easy to check $\mathbb{E}_{\rho}O=\tr(\rho A)$ and the variance of the outcome $O$ depends on both $\rho$ and $A$.
Usually, given an observable $A$, multiple measurements of $A$ are performed on independently and identically prepared quantum systems and the average outcome $Y$ is taken as the output, whose variance can be significantly smaller than the
outcome $O$ from a single measurement. For instance, given the Observable $A$, it is used to conduct measurement on $K$ independently and identically prepared quantum systems, producing the outcomes $O_1,\ldots,O_K$. Then $Y$ is taken as $Y:=K^{-1}\sum_{k=1}^K O_k$. Typically, there are many possible choices for the Observable $A$.
 A common approach is to choose an observable $A$ at random,
assuming that it is the value of a random variable $X$ with some design distribution 
$\Pi$ in a subset of ${\mathbb H}_m.$ More precisely, given a sample of $n$ i.i.d. copies 
$X_1,\dots, X_n$ of $X$, (multiple) measurements are performed for each of them on quantum systems identically
prepared in state $\rho$ resulting into the (average) outcomes $Y_1,\dots, Y_n.$ Based on 
the data $(X_1,Y_1), \dots, (X_n,Y_n),$ the goal is to estimate the underlying density matrix 
$\rho\in\mathcal{S}_m$. Clearly, the observations satisfy the following model
\footnote{It worths to point out that there is a trace regression model with bounded response studied in \cite{koltchinskii15optimal} where $Y_i$ represents the outcome of measuring $X_i$ on a single quantum system without repetition. That is, the data given under this model is $(X_1,O_1),\ldots,(X_n,O_n)$.
The purpose of considering this model is to study how many quantum systems should be produced for estimating the underlying density matrix. Here, model (\ref{trace_regression}) is more related to compressed sensing and primarily focuses on the question: what is the smallest number of Pauli measurements needed to reconstruct the low rank density matrix.}
\begin{equation}
\label{trace_regression}
Y_j={\rm tr}(\rho X_j)+ \xi_j,\ j=1,\dots, n, 
\end{equation}
where $\xi_j$ is a random noise satisfying 
the condition 
$$
{\mathbb E}_{\rho}(\xi_j|X_j)=0\quad \textrm{and}\quad \Var_{\rho}(\xi_j|X_j)=\frac{V(\rho,X_j)}{K}
$$
 for $j=1,\dots, n$ where $V(\rho,X_j)$ denotes the corresponding variance which depends on the measurement $X_j$ and the density matrix $\rho$. This is a special 
case of the so called {\it trace regression model} in the recent literature
(see, e.g., \cite{koltchinskii2011oracle}, \cite{koltchinskii2011nuclear}, \cite{negahban2012restricted}, \cite{rohde2011estimation}, \cite{wang2013asymptotic} and references therein). By CLT, $\big(\sqrt{K}\xi_j|X_j\big)$ can be approximated by a centered Gaussian random variable with variance $V(\rho,X_j)$ as long as $K$ is large enough.

Low rank density matrices ({\it quantum state tomography}) have been studied intensively both in the quantum physics community and in the statistical learning community. In Gross~\cite{gross2011recovering}, Gross et al.~\cite{gross2010quantum} and Liu~\cite{liu2011universal}, the authors introduced the techniques used for matrix compressed sensing problems into {\it quantum state tomography} in order to estimate an unknown low rank state of quantum systems. As a result, the matrix LASSO estimator (see \cite{liu2011universal}, \cite{negahban2012restricted} and \cite{rohde2011estimation}) and matrix Dantzig estimator (\cite{candes2011tight}, \cite{flammia2012quantum} and \cite{liu2011universal}) can be immediately applied in the settings of low rank density matrices.
The {\it restricted isometry property} (RIP) is the key technical tool needed in proving the consistency of these estimators, which requires $n\gtrsim mr\log^6(m)$ Pauli measurements (see Section~\ref{paulisec}). In addition to these methods based on convex programming, a rank penalized estimator was studied in \cite{alquier2013rank} through a rank penalization on a linear estimator reconstructed based on all the nontrivial Pauli measurements. A least squares estimator with penalization by von Neumann entropy was studied in \cite{koltchinskii2011neumann} where an upper bound for the Kullback-Leibler divergence was also proved.

Although those papers mentioned above have provided many meaningful results on low rank density matrices estimation, some other important problems are partially resolved just recently.
The first problem is related to the statistical lower bounds in Kullback-Leibler divergence and Schatten norm distances for low rank density matrices estimation. In \cite{flammia2012quantum}, the authors proved a minimax lower bound for the Schatten $1$-norm (trace norm) distance. Then a method is developed in \cite{koltchinskii15optimal} proving the minimax lower bounds in both Kullback-Leibler divergence and all Schatten norm distances. Those bounds are established under the trace regression model with Gaussian noise (\ref{trace_regression}) and the trace regression model with bounded response. Moreover, a least squares estimator penalized by von Neumann entropy was also studied in \cite{koltchinskii15optimal}, achieving the optimal convergence rates in Schatten $p$-norms for $1\leq p\leq 2$ which match the minimax lower bounds. Similar optimality has also been shown for Kullback-Leibler divergence. Even though the minimax lower bounds under Gaussian noise model has been developed in \cite{koltchinskii15optimal}, there are two unresolved questions. One is that the practical model for each outcome is actually a multinomial model (Binomial model in the case of Pauli measurements), rather than a Gaussian noise model unless the problem is studied in the asymptotical settings. Another question is that the minimax lower bounds developed in \cite{koltchinskii15optimal} are based on the assumption that each $\xi_j$ has the same variance, which is certainly not true in QST. These questions will be resolved in this paper and minimax lower bounds will be developed under the Binomial observation model (the typical model for Pauli measurements) where the noise variance depends on both the underlying density matrix $\rho$ and the corresponding Pauli measurement $X_i$.

Another problem is related to estimators achieving the upper bounds in Schatten $p$-norms for all $1\leq p\leq +\infty$. In the trace regression model with Gaussian noise, the sharp upper bounds in Schatten $p$-norms have been proved in \cite{koltchinskii15optimal} for a least squares estimator (denoted by $\tilde{\rho}$ in the following context). In particular, these upper bounds (except the logarithmic terms) in Schatten norms are given as follows (assuming that $V(\rho,X_j)\leq \sigma_{\xi}^2$ for all $X_j$):
\begin{equation}\label{schattenPkolt1}
\|\tilde{\rho}-\rho\|_p\leq C\bigg(\frac{\sigma_{\xi}m^{3/2}r^{1/p}}{\sqrt{n}}\bigwedge \Big(\frac{\sigma_{\xi}m^{3/2}}{\sqrt{n}}\Big)^{1-\frac{1}{p}}\bigg)\bigwedge 2,\quad 1\leq p\leq 2
\end{equation}
for some constant $C>0$. These bounds hold as long as $\sigma_{\xi}\gtrsim \frac{1}{\sqrt{nm}}$ for Pauli measurements. It is interesting to notice that the second term in (\ref{schattenPkolt1}) implies that $\frac{m}{n}$(logarithmic factors)$\to 0$ is enough to guarantee $\|\tilde{\rho}-\rho\|_2\to 0$ as $m,n\to\infty$. The convergence rates in other Schatten norm distances are proved in
 a recent work \cite{Xia2016Estimation} where a projection estimator is considered and upper bounds in the form (\ref{schattenPkolt1}) are obtained for all $1\leq p\leq +\infty$.  The bounds established in \cite{Xia2016Estimation} relies crucially on the assumption that $\sigma_{\xi}\geq \frac{1}{m}$ for Pauli measurements (if $\sigma_{\xi}\leq \frac{1}{m}$, the bound holds by replacing $\sigma_{\xi}$ with $\frac{1}{m}$). Clearly, one question is that whether the performances of the least squares estimator and the projection estimator can be simultaneously obtained by a single estimator.  And another question is what bounds can we get when $\sigma_{\xi}$ is small?
We seek to answer this question by considering a Dantizg-type estimator for Pauli measurements. Its Schatten $p$-norm convergence rates have the same form as (\ref{schattenPkolt1}) for $1\leq p\leq 2$ as long as $\sigma_{\xi}\gtrsim \frac{1}{\sqrt{nm}}$. In addition, when $\sigma_{\xi}\gtrsim \frac{1}{m}$, its Schatten $p$-norm convergence rates can be obtained for all $1\leq p\leq +\infty$. In other words, the Dantzig-type estimator achieves the performances as both the least squares estimator in \cite{koltchinskii15optimal} and the projection estimator in \cite{Xia2016Estimation}, under the same conditions.
Another advantage is that nontrivial convergence rates in Schatten $p$-norms for $1\leq p\leq +\infty$ can also be obtained under weaker conditions on $\sigma_{\xi}$. A summary of these results can be found in Section~\ref{discusssec}.

\section{Backgrounds and preliminaries}

\subsection{Notations}
The notation $\langle \cdot,\cdot \rangle$ is used 
for both the Euclidean inner product in ${\mathbb C}^m$ and for the Hilbert--Schmidt inner 
product in ${\mathbb H}_m.$
$\|\cdot\|_p, p\geq 1$ will be used to denote the 
Schatten $p$-norm in $\mathbb{H}_m,$ namely $\|A\|_p^p=
\sum\limits_{j=1}^m |\lambda_j(A)|^p,\ A\in\mathbb{H}_m,$ $\lambda_1(A)\geq \ldots \geq \lambda_m(A)$ being the eigenvalues of $A.$ In particular,  $\|\cdot\|_2$ denotes the Hilbert--Schmidt (or Frobenius) norm,
 $\|\cdot\|_1$ denotes the nuclear (or trace) norm and
$\|\cdot\|_{\infty}$ denotes the operator (or spectral) norm: $\|A\|_{\infty}=\max_{1\leq j\leq m}|\lambda_j(A)|.$

Given $A\in {\mathbb H}_m,$ and a random variable $X$ in 
${\mathbb H}_m$ with a distribution $\Pi$, we write $\|A\|_{L_2(\Pi)}^2=
\int_{{\mathbb H}_m}\langle A,x\rangle^2\Pi(dx)$. Similarly, define 
$$
\langle A,B\rangle_{L_2(\Pi)}=\int_{\mathbb{H}_m}\langle A,x\rangle \langle B,x\rangle\Pi(dx).
$$
In what 
follows, $\Pi$ will be typically the uniform distribution in an orthonormal basis 
${\mathcal E}=\{E_1,\dots, E_{m^2}\}\subset {\mathbb H}_m,$ implying that 
$$
\|A\|_{L_2(\Pi)}^2 = m^{-2}\|A\|_2^2\quad\text{and}\quad \langle A,B\rangle_{L_2(\Pi)}=\frac{1}{m^2}\langle A,B\rangle, 
$$
so, the $L_2(\Pi)$-norm is just a rescaled Hilbert--Schmidt norm.
In addition, let $\Pi_n$ denote the empirical distribution based on the sample $(X_1,\ldots,X_n)$ such that $\|A\|_{L_2(\Pi_n)}^2=\frac{1}{n}\sum_{i=1}^n\big<A,X_i\big>^2$.

The non-commutative Kullback-Leibler divergence (or relative entropy distance) $K(\cdot\|\cdot)$ is defined as (see also \cite{Nielsen2000}):
\begin{equation*}
 K(S_1\|S_2):=\big<S_1,\log S_1-\log S_2\bigr>
\end{equation*}
for $S_1,S_2\in\mathcal{S}_m$.
If $\log S_2$ is not well-defined (for instance, some of the eigenvalues of $S_2$ are equal to $0$), we set $K(S_1\|S_2)=+\infty$. The symmetrized version of Kullback-Leibler divergence is defined as
\begin{equation*}
 K(S_1;S_2):= K(S_1\|S_2)+K(S_2\|S_1)=\bigl<S_1-S_2,\log S_1-\log S_2\bigr>.
\end{equation*}

$C, C_1, C',c, c',$ etc will denote constants (that do not depend on parameters of interest)
 whose values could change from line to line (or, even, within the same line)
without further notice.  For nonnegative $A$ and $B,$ $A\lesssim B$ (equivalently, $B\gtrsim A$) means that $A\leq CB$ for some absolute constant $C>0,$ and $A\asymp B$ means that 
$A\lesssim B$ and $B\lesssim A$ simultaneously.  Moreover, by writing $A\lesssim_{\log(m,n,K)}B$, we mean that $A\leq C_1B\log^{c_1}m\log^{c_2}n\log^{c_3}K$ for some absolute constants $C_1,c_1,c_2,c_3$. 

\subsection{Sampling from Pauli basis and Binomial observation model}\label{paulisec}
The spin-$\frac{1}{2}$ particle is the simplest example of a two-state quantum system, which is conventionally called a qubit. The state of a single qubit is determined by its spin: up, down or
a superposition of both. One most popular Observable for a single qubit system is usually represented by Pauli matrices. They are given as
\begin{equation}
 \sigma_0:=\left(\begin{array}{cc}1&0\\0&1 \end{array}\right),\quad \sigma_x:=\left(\begin{array}{cc}0&1\\1&0 \end{array}\right),
  \quad \sigma_y:=\left(\begin{array}{cc}0&i\\-i&0 \end{array}\right),\quad \sigma_z:=\left(\begin{array}{cc}1&0\\0&-1 \end{array}\right).
\end{equation}
The matrices $\sigma_x,\sigma_y,\sigma_z$ correspond to the spin along the coordinate axes in $\mathbb{R}^3$. The additional matrix $\sigma_0$ represents a trivial operation on the single
qubit system. Define $W_{\alpha}:=\frac{1}{\sqrt{2}}\sigma_{\alpha}$
for $\alpha=0,x,y,z$. Then $\{W_0,W_x,W_y,W_z\}$ consists of an orthonormal basis of $\mathbb{H}_2$ which is conventionally called the Pauli basis. The Pauli matrices can be easily generalized for multi-qubit systems. Indeed,
for a system with $b$ qubits, the normalized Pauli matrices are constructed as 
\begin{equation}\label{mcalE}
\mathcal{E}:=\big\{W_{\alpha_1}\otimes W_{\alpha_2}\otimes\ldots\otimes W_{\alpha_b}: \alpha_1,\ldots,\alpha_b\in\{0,x,y,z\} \big\},
\end{equation}
which actually form an orthonormal basis of $\mathbb{H}_m$ with $m=2^b$. 
We rearrange the Pauli matrices $\mathcal{E}:=\{E_i, 1\leq i\leq m^2\}$ such that $E_1=W_0\otimes\ldots\otimes W_0=\frac{1}{\sqrt{m}}I_m$ and $E_2,\ldots,E_{m^2}$ denote the rest of Pauli matrices in (\ref{mcalE}).
An obvious fact is that $\frac{1}{\sqrt{m}}$ is the only eigenvalue of $E_1$ and $\pm\frac{1}{\sqrt{m}}$ are the eigenvalues of $E_2,\ldots,E_{m^2}$ with the same multiplicity such
that $\tr(E_k)=0$ for $2\leq k\leq m^2$. In other words, one has the spectral decomposition 
$E_k=\frac{1}{\sqrt{m}}P_k^{+}-\frac{1}{\sqrt{m}}P_k^-$ with $\rank(P_k^+)=\rank(P_k^-)=\frac{m}{2}$ for $2\leq k\leq m^2$. Note that $E_1=\frac{1}{\sqrt{m}}I_m$ with $P_1^-=0$. By measuring $E_k$
on a $b$-qubits system prepared in the state $\rho\in\mathcal{S}_m$ with $m=2^b$, the outcome $\tau_k$ is a random variable taking values $\pm\frac{1}{\sqrt{m}}$ with probability $\langle\rho,P_k^{\pm}\rangle$ 
and $\mathbb{E}_{\rho}\tau_k=\langle\rho,E_k\rangle$ for $1\leq k\leq m^2$. If we represent $\rho$ in the Pauli basis
$$
\rho=\sum_{k=1}^{m^2}\frac{\alpha_k}{\sqrt{m}}E_k
$$
with $\alpha_1=1$ and $|\alpha_k|\leq 1$ for $2\leq k\leq m^2$. Then $\mathbb{P}_{\rho}(\tau_k=\frac{\pm 1}{\sqrt{m}})=\frac{1\pm \alpha_k}{2}$ and $\text{Var}_{\rho}(\tau_k)=\frac{1-\alpha_k^2}{m}$ depending only on the coefficient $\alpha_k$.
Since $\|\rho\|_2^2=\sum_{k=1}^{m^2}\frac{\alpha_k^2}{m}\leq 1$, it indicates that $\text{Card}\big(\{k: |\alpha_k|>\frac{1}{2}\}\big)\leq 4m$. Therefore, for most of
$k$ (at least $m^2-4m$), $\text{Var}_{\rho}(\tau_k)\geq \frac{1}{2m}$.

A standard approach in QST is to randomly select $X$ uniformly from the Pauli basis $\mathcal{E}$. Then multiple measurements of $X$ are conducted on quantum systems independently
prepared in the same (unknown) state $\rho$. Suppose that $K$ measurements are performed, resulting into the outcomes $O_1,\ldots,O_K$. 
Clearly, $|O_k|=\frac{1}{\sqrt{m}}$ for $1\leq k\leq K$. Let $X_1,X_2,\ldots,X_n$ be i.i.d. random Pauli matrices sampled uniformly from $\mathcal{E}$ with replacement. For each $X_i$, $K$ measurements are conducted and the outcomes are collected. Let $K_i^+$ denote the number of outcomes $+\frac{1}{\sqrt{m}}$ and $K_i^-$ denote the number of outcomes  $-\frac{1}{\sqrt{m}}$. Then $K_i^++K_i^-=K$. It is clear that $K_i^+$ has a Binomial distribution and $K_i^+\sim\textrm{Bin}\big(K,\langle\rho, X_i^+\rangle\big)$ where $X_i^+$ and $X_i^-$ represent the spectral projectors of $X_i$ corresponding to the eigenvalues $+\frac{1}{\sqrt{m}}$ and $-\frac{1}{\sqrt{m}}$. Moreover, $K^{-1}\sum_{k=1}^K O_k=\frac{K_i^+-K_i^-}{K\sqrt{m}}$. In other words, if we define $Y_i=\frac{K_i^+-K_i^-}{K\sqrt{m}}$, then
$(X_i,Y_i)$ satisfies the trace regression model (\ref{trace_regression}) $Y_i=\langle\rho,X_i\rangle+\xi_i$ with $\mathbb{E}_{\rho}(\xi_i|X_i)=0$ and $\mathbb{E}_{\rho}(\xi_i^2|X_i)\leq \frac{1}{Km}$. By assuming $\xi_i|X_i\sim \mathcal{N}(0,\sigma_{\xi}^2)$ for all $1\leq i\leq n$, the minimax lower bounds of estimating low rank $\rho\in \mathcal{S}_{m,r}$ based on the data $(X_1,Y_1),\ldots,(X_n,Y_n)$ was established in \cite{koltchinskii15optimal}. Clearly, the assumption of Gaussian noise with common variance is not true in practice. In section~\ref{minimaxsec}, we prove the minimax lower bounds based on the data $(X_1,K_1^+),\ldots,(X_n,K_n^+)$ where $\big(K_i^+\mid X_i\big)$ has a Binomial distribution for $1\leq i\leq n$. Formally, our model is described as follows.
\begin{assump}[Binomial observation model]
\label{trassump}
Let $\mathcal{E}$ be the Pauli basis as in (\ref{mcalE}).
Let $X$ be sampled uniformly from $\mathcal{E}$, then measurements of $X$ are conducted on $K$ quantum systems independently and identically prepared in the state $\rho$ . Let $K^+$ be the number of $+\frac{1}{\sqrt{m}}$ collected from the $K$ random outcomes.
The data $(X_1,K_1^+),\dots (X_n,K_n^+)$ consists of $n$ i.i.d.
copies of $(X,K^+).$ Let $\mathbb{P}_{\rho}$ denote the probability distribution of $(X_1,K_1^+),\dots (X_n,K_n^+)$.
\end{assump}

\subsection{Some useful lemmas}
The following well known {\it interpolation inequality} for 
Schatten $p$-norms will be used to extend the bounds proved for $p=1$ and $p=\infty$
to the whole range of its values. It easily follows from similar bounds in $\ell_p$-spaces. 

\begin{lemma}[Interpolation inequality]
\label{interlem}
 For $1\leq p<q<r\leq\infty$, and let $\mu\in[0,1]$ be such that
\begin{equation*}
 \frac{\mu}{p}+\frac{1-\mu}{r}=\frac{1}{q}.
\end{equation*}
Then, for all $A\in\mathbb{H}_m,$
\begin{equation*}
 \|A\|_q\leq \|A\|_p^{\mu}\|A\|_r^{1-\mu}.
\end{equation*}
\end{lemma}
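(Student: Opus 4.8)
The plan is to reduce the matrix statement to the corresponding inequality for the $\ell_q$-norm of the eigenvalue sequence, and then to obtain that inequality from a single application of H\"older's inequality. Given $A\in\mathbb{H}_m$, set $x=(|\lambda_1(A)|,\dots,|\lambda_m(A)|)\in\mathbb{R}_+^m$; since $\|A\|_p^p=\sum_{j=1}^m|\lambda_j(A)|^p$ for $p<\infty$ and $\|A\|_\infty=\max_j|\lambda_j(A)|$, one has $\|A\|_p=\|x\|_{\ell_p}$ for every $p\in[1,\infty]$. Hence it is enough to prove that $\|x\|_{\ell_q}\le\|x\|_{\ell_p}^{\mu}\|x\|_{\ell_r}^{1-\mu}$ for an arbitrary nonnegative vector $x$, where $\mu$ satisfies $\frac{\mu}{p}+\frac{1-\mu}{r}=\frac1q$. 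Note that the strict inequalities $p<q<r$ force $\mu\in(0,1)$, so the exponents introduced below are well defined.

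First I would dispose of the case $r<\infty$. Write each term as $x_j^q=x_j^{\mu q}\cdot x_j^{(1-\mu)q}$ and apply H\"older's inequality with the conjugate exponents $s=\frac{p}{\mu q}$ and $s'=\frac{r}{(1-\mu)q}$; the identity $\frac1s+\frac1{s'}=q\big(\frac{\mu}{p}+\frac{1-\mu}{r}\big)=1$ is precisely the hypothesis on $\mu$, and $s,s'\ge 1$ follows from the same relation together with $\mu/p\le 1/q$ and $(1-\mu)/r\le 1/q$. This yields
$$
\sum_{j=1}^m x_j^q\ \le\ \Big(\sum_{j=1}^m x_j^{p}\Big)^{\mu q/p}\Big(\sum_{j=1}^m x_j^{r}\Big)^{(1-\mu)q/r},
$$
and raising both sides to the power $1/q$ gives the desired bound. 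For the case $r=\infty$ the constraint reads $\mu q=p$, so $x_j^q=x_j^{p}\,x_j^{(1-\mu)q}\le x_j^{p}\|x\|_{\ell_\infty}^{(1-\mu)q}$; summing over $j$ and taking the $q$-th root gives $\|x\|_{\ell_q}\le\|x\|_{\ell_p}^{p/q}\|x\|_{\ell_\infty}^{1-\mu}=\|x\|_{\ell_p}^{\mu}\|x\|_{\ell_\infty}^{1-\mu}$. Alternatively one may obtain the $r=\infty$ case by letting $r\to\infty$ in the finite-$r$ inequality, using $\|x\|_{\ell_r}\to\|x\|_{\ell_\infty}$ and continuity of the right-hand side in $\mu=\mu(r)$.

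There is no genuine obstacle in this argument; it is entirely routine. The only points that require a moment's care are checking that the H\"older exponents $s,s'$ are admissible (which is exactly where the relation among $p,q,r$ and $\mu$ enters) and observing that the passage from Schatten norms to $\ell_p$-norms of the eigenvalue vector is legitimate because Schatten norms are spectral functions of $A$. Everything else is bookkeeping, consistent with the remark preceding the statement that the inequality ``easily follows from similar bounds in $\ell_p$-spaces.''
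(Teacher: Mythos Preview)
Your proof is correct and follows exactly the approach the paper indicates: the paper does not give a detailed proof but simply remarks that the inequality ``easily follows from similar bounds in $\ell_p$-spaces,'' which is precisely your reduction to the eigenvalue vector followed by H\"older's inequality.
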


Given a subspace $L\subset {\mathbb C}^m,$ 
$L^{\perp}$ denotes the orthogonal complement of $L$ and  $P_L$ denotes the orthogonal projection onto $L.$ Let 
${\mathcal P}_L, {\mathcal P}_L^{\perp}$ be orthogonal projection operators 
in the space ${\mathbb H}_m$ (equipped with the Hilbert--Schmidt inner product),
defined as follows:
$$
{\mathcal P}_L^{\perp}(A)=P_{L^{\perp}}AP_{L^{\perp}},\ \ {\mathcal P}_L(A)=A-P_{L^{\perp}}AP_{L^{\perp}}.
$$
These two operators split any Hermitian matrix $A$ into two orthogonal parts, 
${\mathcal P}_L(A)$ and ${\mathcal P}_L^{\perp}(A),$ the first one being of rank
at most $2{\rm dim}(L).$

Non-commutative (matrix) versions of Bernstein inequality will be used in what 
follows. Lemma~\ref{matBernlem} is an unbounded version of the standard matrix Bernstein inequality (see \cite{tropp2012user}) whose proof is provided in \cite{koltchinskii2011neumann}. Recall that, for any $\alpha\geq 1$, the $\psi_{\alpha}$-norm
of a random variable $Z$ is defined as
$$
\|Z\|_{\psi_{\alpha}}:=\inf\Big\{C>0: \mathbb{E}e^{|Z|^{\alpha}/C^{\alpha}}\leq 2\Big\}.
$$ 

\begin{lemma}
\label{matBernlem}
Let $X,X_1,\ldots,X_n\in\mathbb{H}_m$ be i.i.d. random matrices with $\mathbb{E}X=0,$ $\sigma_X^2:=\|\mathbb{E}X^2\|_{\infty}$ and $U_X^{(\alpha)}\geq \max\big(\|\|X\|_{\infty}\|_{\psi_{\alpha}},2\mathbb{E}^{1/2}\|X\|_{\infty}^2\big)$ for $\alpha\geq 1$.
 Then, for all $t\geq 0$, with probability at least $1-e^{-t},$
\begin{equation*}
 \biggl\|\frac{1}{n}\sum_{j=1}^nX_j\biggr\|_{\infty}\leq C\bigg(\sigma_X\sqrt{\frac{t+\log(2m)}{n}}\bigvee U_X^{(\alpha)}\log\Big(\frac{U_X^{(\alpha)}}{\sigma_X}\Big)\frac{t+\log(2m)}{n}\bigg).
\end{equation*}
\end{lemma}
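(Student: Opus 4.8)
The plan is to deduce the unbounded inequality from the classical bounded matrix Bernstein inequality of Tropp~\cite{tropp2012user} by a single truncation of the summands. Fix a threshold $\delta>0$, to be calibrated at the end, and write $X_j=X_j'+X_j''$ with $X_j':=X_j\mathbf{1}\{\|X_j\|_\infty\le\delta\}$ and $X_j'':=X_j\mathbf{1}\{\|X_j\|_\infty>\delta\}$. Since $\mathbb{E}X_1=0$ we have $\mathbb{E}X_1'=-\mathbb{E}X_1''$, hence
\[
\frac1n\sum_{j=1}^n X_j=\frac1n\sum_{j=1}^n\bigl(X_j'-\mathbb{E}X_1'\bigr)+\frac1n\sum_{j=1}^n X_j''-\mathbb{E}X_1''.
\]
The first term is a normalized sum of i.i.d.\ centered matrices bounded by $2\delta$ in operator norm; the remaining two terms are controlled purely by the light $\psi_\alpha$-tail of $\|X\|_\infty$.

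For the bounded term, observe that $0\preceq\mathbb{E}\bigl(X_j'-\mathbb{E}X_1'\bigr)^2=\mathbb{E}(X_j')^2-(\mathbb{E}X_1')^2\preceq\mathbb{E}(X_j')^2\preceq\mathbb{E}X^2$ in the positive-semidefinite order (the indicator lies in $[0,1]$ and $X_j^2\succeq0$, while subtracting the square of the mean only decreases the matrix), so the variance proxy has operator norm at most $\sigma_X^2$. The standard matrix Bernstein inequality then gives, for every $u>0$, with probability at least $1-2me^{-u}$,
\[
\Bigl\|\tfrac1n\textstyle\sum_{j=1}^n\bigl(X_j'-\mathbb{E}X_1'\bigr)\Bigr\|_\infty\lesssim\sigma_X\sqrt{\tfrac{u}{n}}+\delta\,\tfrac{u}{n}.
\]
For the remainder, the hypothesis $\bigl\|\,\|X\|_\infty\,\bigr\|_{\psi_\alpha}\le U_X^{(\alpha)}$ yields $\mathbb{P}(\|X\|_\infty>\delta)\le 2\exp(-(\delta/U_X^{(\alpha)})^\alpha)$, so by a union bound $X_j''=0$ for all $j$ with probability at least $1-2n\exp(-(\delta/U_X^{(\alpha)})^\alpha)$; and the deterministic bias obeys $\|\mathbb{E}X_1''\|_\infty\le\mathbb{E}\bigl[\|X\|_\infty\mathbf{1}\{\|X\|_\infty>\delta\}\bigr]\le\mathbb{E}^{1/2}\|X\|_\infty^2\cdot\mathbb{P}^{1/2}(\|X\|_\infty>\delta)$, which (using $U_X^{(\alpha)}\ge 2\mathbb{E}^{1/2}\|X\|_\infty^2$) decays exponentially in $(\delta/U_X^{(\alpha)})^\alpha$ and is negligible compared with the bounded-term bound. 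Taking $u\asymp t+\log(2m)$ and choosing the truncation level $\delta\asymp U_X^{(\alpha)}\bigl(\log(U_X^{(\alpha)}/\sigma_X)\bigr)^{1/\alpha}$ (up to the additional $(t+\log n)^{1/\alpha}$ that makes the $\psi_\alpha$-tail events truly negligible, which in the regime of interest is of the same order) turns the surviving $\delta u/n$ piece into $U_X^{(\alpha)}\log(U_X^{(\alpha)}/\sigma_X)(t+\log 2m)/n$; a final union bound over the two failure events then yields the claim.

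The step I expect to be the main obstacle is precisely this calibration. The threshold $\delta$ must be large enough that the heavy-tailed summands $X_j''$ and the bias $\mathbb{E}X_1''$ contribute nothing of the order of the main bound --- which is where the lower bound $2\mathbb{E}^{1/2}\|X\|_\infty^2$ built into $U_X^{(\alpha)}$ is used --- yet small enough that the $\delta u/n$ term surviving from the bounded-part Bernstein bound stays at the advertised size. The logarithmic factor $\log(U_X^{(\alpha)}/\sigma_X)$ in the statement is exactly the residue of this trade-off, and obtaining it with the correct power of logarithm (rather than a spurious dependence on $n$) is the one genuinely delicate point of the argument.
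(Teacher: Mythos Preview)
The paper does not actually prove this lemma; it merely cites \cite{koltchinskii2011neumann} for the proof. Your truncation strategy---split $X_j$ at level $\delta$, apply bounded matrix Bernstein to the centred truncated part, and control the remainder via the $\psi_\alpha$-tail---is exactly the standard route used there, so in spirit you are on the same track.

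That said, the calibration step you flag as ``the one genuinely delicate point'' is not yet resolved in your sketch, and the handwave does not close the gap. With the choice $\delta\asymp U_X^{(\alpha)}(\log(U_X^{(\alpha)}/\sigma_X))^{1/\alpha}$ alone, the union bound over $n$ samples gives failure probability $2n\exp(-(\delta/U_X^{(\alpha)})^\alpha)=2n\,\sigma_X/U_X^{(\alpha)}$, which is in general not small. Adding the $(t+\log n)^{1/\alpha}$ you mention fixes the probability but inserts a $\log n$ into the deviation term, whereas the stated bound contains only $\log(U_X^{(\alpha)}/\sigma_X)$ and no $n$. Your assertion that these are ``of the same order in the regime of interest'' is an application-level observation, not a proof of the lemma as stated.

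To get the form in the lemma one does not rely on the event $\{X_j''=0\ \forall j\}$. Instead, bound the contribution of the tails in expectation: the $\psi_\alpha$ hypothesis gives $\mathbb{E}\bigl[\|X\|_\infty\,\mathbf{1}\{\|X\|_\infty>\delta\}\bigr]\lesssim U_X^{(\alpha)}\exp\bigl(-c(\delta/U_X^{(\alpha)})^\alpha\bigr)$ and similarly for the second moment, so one can control $\bigl\|\mathbb{E}(X'')^2\bigr\|_\infty$ and fold the truncated tails back into the variance proxy rather than discarding them via a union bound. Balancing the variance contribution $\sigma_X^2$ against the residual tail variance forces $\delta$ to scale with $U_X^{(\alpha)}\log^{1/\alpha}(U_X^{(\alpha)}/\sigma_X)$, and it is this balancing---not the crude union bound over samples---that produces the $\log(U_X^{(\alpha)}/\sigma_X)$ factor without any $\log n$. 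Filling in that moment computation is the missing ingredient in your sketch.
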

The following lemma will be helpful. It means that given $S\in\mathcal{S}_m$ and its support $L$, then the projection of $S_1-S$ onto
$L$ dominates its complement in nuclear norm for all $S_1\in\mathcal{S}_m$. This is due to the unit trace of all density matrices.
\begin{lemma}
\label{lowrankcone}
Let $S\in\mathcal{S}_m$ such that $\text{rank}(S)\leq l$. 
 Then any $S_1\in\mathcal{S}_m$, the following inequality holds
 \begin{equation}
\label{lowrankconeineq1}
  \|S_1-S\|_1\leq 2\sqrt{2l}\|S_1-S\|_2.
 \end{equation}
\end{lemma}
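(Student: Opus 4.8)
The plan is to decompose $S_1-S$ with respect to the support $L$ of $S$, using the orthogonal projection operators $\mathcal{P}_L,\mathcal{P}_L^{\perp}$ introduced above, and then to exploit the unit-trace constraint $\tr(S_1)=\tr(S)=1$ to bound the ``off-support'' piece of $S_1-S$ by its ``on-support'' piece. First I would set $L=\text{range}(S)$, so that $\dim(L)=\rank(S)\leq l$, and write $\Delta:=S_1-S=\mathcal{P}_L(\Delta)+\mathcal{P}_L^{\perp}(\Delta)$ with $\mathcal{P}_L^{\perp}(\Delta)=P_{L^{\perp}}\Delta P_{L^{\perp}}$. Because $S$ is supported on $L$ we have $P_{L^{\perp}}SP_{L^{\perp}}=0$, hence $\mathcal{P}_L^{\perp}(\Delta)=P_{L^{\perp}}S_1P_{L^{\perp}}\succcurlyeq 0$ since $S_1\succcurlyeq 0$. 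The key point is then that the nuclear norm of a positive semi-definite matrix equals its trace, so $\|\mathcal{P}_L^{\perp}(\Delta)\|_1=\tr\big(\mathcal{P}_L^{\perp}(\Delta)\big)$.

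Next I would use $\tr(\Delta)=\tr(S_1)-\tr(S)=0$, which forces $\tr\big(\mathcal{P}_L^{\perp}(\Delta)\big)=-\tr\big(\mathcal{P}_L(\Delta)\big)$. Combining this with the previous identity and the elementary bound $|\tr(A)|\leq\|A\|_1$ gives
\[
\|\mathcal{P}_L^{\perp}(\Delta)\|_1=\big|\tr\big(\mathcal{P}_L(\Delta)\big)\big|\leq\|\mathcal{P}_L(\Delta)\|_1 .
\]
Since $\mathcal{P}_L(\Delta)$ has rank at most $2\dim(L)\leq 2l$ (as noted right after the definition of $\mathcal{P}_L$), Cauchy--Schwarz applied to the singular values yields $\|\mathcal{P}_L(\Delta)\|_1\leq\sqrt{2l}\,\|\mathcal{P}_L(\Delta)\|_2\leq\sqrt{2l}\,\|\Delta\|_2$, where the last inequality holds because $\mathcal{P}_L$ is an orthogonal projection in the Hilbert--Schmidt geometry. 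Putting the two pieces together,
\[
\|S_1-S\|_1=\|\Delta\|_1\leq\|\mathcal{P}_L(\Delta)\|_1+\|\mathcal{P}_L^{\perp}(\Delta)\|_1\leq 2\sqrt{2l}\,\|\Delta\|_2=2\sqrt{2l}\,\|S_1-S\|_2,
\]
which is the desired inequality (\ref{lowrankconeineq1}).

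I do not expect a serious obstacle here: the whole argument rests on the single observation that $\mathcal{P}_L^{\perp}(S_1-S)=P_{L^{\perp}}S_1P_{L^{\perp}}$ is positive semi-definite, so that its nuclear norm coincides with its trace, and that the unit-trace constraint then transfers that trace onto the rank-$\leq 2l$ component $\mathcal{P}_L(S_1-S)$. The only place requiring a bit of care is the bookkeeping of the constant: the factor $2$ inside $\sqrt{2l}$ comes from $\rank\big(\mathcal{P}_L(A)\big)\leq 2\dim(L)$, and the overall factor $2$ in front comes from simply summing the bounds for the two orthogonal parts rather than from any subtler estimate.
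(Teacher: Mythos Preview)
Your proof is correct and arrives at exactly the same key inequality as the paper, namely $\|\mathcal{P}_L^{\perp}(S_1-S)\|_1\le\|\mathcal{P}_L(S_1-S)\|_1$, after which the conclusion (via $\rank\mathcal{P}_L(\cdot)\le 2l$ and Cauchy--Schwarz on singular values) is identical. The only difference lies in how that cone inequality is derived. The paper argues in the style of the standard nuclear-norm cone condition: it writes $1=\|S_1\|_1\ge\|\mathcal{P}_L^{\perp}(S_1)+S\|_1-\|\mathcal{P}_L(S_1-S)\|_1=\|\mathcal{P}_L^{\perp}(S_1)\|_1+\|S\|_1-\|\mathcal{P}_L(S_1-S)\|_1$, using the triangle inequality and the fact that $S$ and $\mathcal{P}_L^{\perp}(S_1)$ have orthogonal supports so their nuclear norms add. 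You instead use directly that $\mathcal{P}_L^{\perp}(S_1-S)=P_{L^\perp}S_1P_{L^\perp}\succcurlyeq 0$, so its nuclear norm equals its trace, and then the trace-zero constraint $\tr(S_1-S)=0$ transfers that trace onto $\mathcal{P}_L(S_1-S)$. Your route is a bit more transparent and uses only the trace constraint plus positivity; the paper's phrasing is closer to the compressed-sensing literature. Substantively, though, the two arguments are the same: both hinge on the positive semi-definiteness of $P_{L^\perp}S_1P_{L^\perp}$, which is what makes the nuclear norm additive (equivalently, equal to the trace) on the off-support part.
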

\begin{proof}
 Let $L$ denotes the linear space spanned by the first $l$ eigenvectors of $S$
 and $\mathcal{P}_L,\mathcal{P}_{L}^{\perp}$ be corresponding orthogonal projection operators.
 Then if $\text{rank}(S)\leq l$,
 \begin{equation}
\label{lowrankconeineq3}
  \begin{split}
   \|S\|_1=1=&\|S_1\|_1=\|S_1-S+S\|_1\\
   =&\|\mathcal{P}_L(S_1-S)+\mathcal{P}_L^{\perp}(S_1-S)+S\|_1\\
   \geq&\|\mathcal{P}_L^{\perp}(S_1)+S\|_1-\|\mathcal{P}_L(S_1-S)\|_1\\
   =&\|\mathcal{P}_L^{\perp}(S_1)\|_1+\|S\|_1-\|\mathcal{P}_L(S_1-S)\|_1,
  \end{split}
 \end{equation}
 where the last equality is due to the mutual orthogonality between $S$ and $\mathcal{P}_L^{\perp}(S_1)$. As a consequence,
 \begin{equation*}
  \|\mathcal{P}_L^{\perp}(S_1)\|_1\leq \|\mathcal{P}_L(S_1-S)\|_1
 \end{equation*}
 and $\|S_1-S\|_1\leq 2\|\mathcal{P}_L(S_1-S)\|_1\leq 2\sqrt{2l}\|S_1-S\|_2$ since $\mathcal{P}_L(S_1-S)$ has rank at most $2l$, which proves the inequality (\ref{lowrankconeineq1}).
\end{proof}

\section{Minimax lower bounds}\label{minimaxsec}
In this section, we prove the minimax lower bounds in Schatten $p$-norms of estimating $\rho\in\mathcal{S}_{m,r}$ under the Binomial observation model (Assumption~\ref{trassump}). These bounds hold for all the Schatten $p$-norms with $1\leq p\leq +\infty$. In addition, we also obtain the minimax lower bound for Kullback-Leibler divergence. Note that the bounds in Theorem~\ref{minimaxthm} are equivalent to the bounds proved in \cite{koltchinskii15optimal} under the Gaussian noise model with common variance (by setting $\sigma_{\xi}^2\asymp \frac{1}{Km}$ in \cite[Theorem~4]{koltchinskii15optimal}).
\begin{theorem}\label{minimaxthm}
Under the Binomial observation model (Assumption~\ref{trassump}),
 for all $p\in[1,+\infty]$, there exist constants $c,c'>0$ such that the following bounds hold:
 \begin{equation}\label{minimaxthmineq1}
  \underset{\hat{\rho}}{\inf}\ \underset{\mathcal{S}_{m,r}}{\sup}\  \mathbb{P}_\rho\bigg\{
  \|\hat{\rho}-\rho\|_p\geq c\bigg(\frac{mr^{1/p}}{\sqrt{nK}}\bigwedge \bigg(\frac{m}{\sqrt{nK}}\bigg)^{1-\frac{1}{p}}\bigwedge 1\bigg)\bigg\}\geq c'
 \end{equation}
and
\begin{equation}\label{minimaxthmineq2}
 \underset{\hat{\rho}}{\inf}\ \underset{\rho\in\mathcal{S}_{m,r}}{\sup}\  \mathbb{P}_\rho\bigg\{
  K(\rho\|\hat{\rho})\geq c\bigg(\frac{mr}{\sqrt{nK}}\bigwedge 1\bigg)\bigg\}\geq c'
\end{equation}
where $\inf_{\hat{\rho}}$ is taken as the infimum over all etimators $\hat{\rho}$ based on the data $(X_1,K_1^+),\ldots,(X_n,K_n^+)$.
\end{theorem}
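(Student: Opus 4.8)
The plan is to prove both bounds by Fano's method, passing from the Binomial model of Assumption~\ref{trassump} to the Gaussian trace-regression model of \cite{koltchinskii15optimal} via the following reduction. If $\rho,\rho'\in\mathcal{S}_{m,r}$ have all nontrivial Pauli coefficients bounded, say $|\alpha_k(\rho)|,|\alpha_k(\rho')|\le\frac12$ for $2\le k\le m^2$ (equivalently $\langle\rho,P_k^+\rangle,\langle\rho',P_k^+\rangle\in[\frac14,\frac34]$), then, since the law of each $(X_i,K_i^+)$ is the uniform mixture over $X_i\in\mathcal{E}$ of $\mathrm{Bin}(K,\langle\rho,X_i^+\rangle)$ laws and $\mathrm{KL}(\mathrm{Bern}(p)\|\mathrm{Bern}(q))\le\frac{(p-q)^2}{q(1-q)}$, while $\langle\rho-\rho',P_k^+\rangle=\frac{\sqrt m}{2}\langle\rho-\rho',E_k\rangle$, one gets
\[
\mathrm{KL}\big(\mathbb{P}_\rho\,\big\|\,\mathbb{P}_{\rho'}\big)\ \le\ C\,\frac{nK}{m}\,\|\rho-\rho'\|_2^2 .
\]
On any family of mutually \emph{incoherent} low rank density matrices the Binomial model is therefore, for Fano's inequality, equivalent to the Gaussian model with $\sigma_\xi^2\asymp\frac1{Km}$ — which is exactly the identification with \cite[Theorem~4]{koltchinskii15optimal}. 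The requirement that the $\langle\rho,P_k^+\rangle$ stay away from $0$ and $1$ is essential (otherwise this KL is infinite), so each packing element must be certified incoherent; I would do this by randomizing the construction and using Levy concentration on the Grassmannian together with the Pauli identity $E_k^2=\frac1mI_m$, which gives $\|P_LE_k\|_2=\sqrt{\dim L/m}$ and hence Lipschitz constant $\lesssim\sqrt{\dim L/m}$ for $L\mapsto\langle P_L,E_k\rangle$, small enough to survive a union bound over the $m^2$ Paulis and the exponentially many packing points.

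\emph{The Schatten $p$-norm bound.} I would use one packing with ``effective rank'' $s:=r\wedge\lceil\sqrt{nK}/m\rceil$ (when $\sqrt{nK}<m$ the claimed rate is $\asymp1$ and one takes $s=1$). Fixing a generic $s$-dimensional $L_0\subset\mathbb{C}^m$, let $L$ range over a $c\epsilon$-separated net of the geodesic ball of radius $\epsilon:=1\wedge\frac{sm}{\sqrt{nK}}$ about $L_0$ in $\mathrm{Gr}(s,m)$, arranged so that $P_L-P_{L'}$ has rank $\asymp s$ with a flat spectrum at scale $\asymp\epsilon$, and set $\rho_L:=\frac1sP_L$. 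Then $\rho_L\in\mathcal{S}_{m,r}$ with $\rank(\rho_L)=s\le r$; the net has $\log M\asymp s(m-s)\asymp sm$; one has $\|\rho_L-\rho_{L'}\|_2\asymp\epsilon s^{-1/2}$ and, by Weyl's inequalities and Lemma~\ref{interlem}, $\|\rho_L-\rho_{L'}\|_p\asymp\epsilon\,s^{1/p-1}$, which a direct case check shows equals $\big(\frac{ms^{1/p}}{\sqrt{nK}}\big)\wedge\big(\frac m{\sqrt{nK}}\big)^{1-1/p}\wedge1$ up to constants. The reduction gives $\mathrm{KL}(\mathbb{P}_{\rho_L}\|\mathbb{P}_{\rho_{L'}})\lesssim\frac{nK}{m}\epsilon^2 s^{-1}\asymp sm\asymp\log M$, so Fano's lemma yields \eqref{minimaxthmineq1}; incoherence of all $\rho_L$ is checked as above.

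\emph{The relative entropy bound.} Here I would combine a companion construction with the elementary estimate $K(\rho\,\|\,\hat\rho)\ge\log\frac1{\langle\hat\rho,P_L\rangle}$ valid for $\rho=\frac1rP_L$ (operator concavity of $\log$, i.e.\ Jensen applied to the eigenvectors of $L$): making $K(\rho\|\hat\rho)$ small forces the estimator to put almost all of its mass on the unknown rank-$\le r$ support. Running Fano over a net of supports in $\mathrm{Gr}(r,m)$ with $\mathrm{KL}(\mathbb{P}_{\rho_L}\|\mathbb{P}_{\rho_{L'}})\lesssim\log M\asymp rm$ already gives $K\gtrsim1$ when $\sqrt{nK}\lesssim rm$; in the complementary regime I would localize the net and track how $\|\mathbb{E}_LP_L\|_\infty$ drops below $1$ to recover the rate $\frac{mr}{\sqrt{nK}}$.

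The hard part will be the incoherence control: the packings that are large enough for Fano are precisely the spread-out ones, and the comparison $\mathrm{KL}(\mathbb{P}_\rho\|\mathbb{P}_{\rho'})\asymp\frac{nK}{m}\|\rho-\rho'\|_2^2$ — the entire basis of the equivalence with the Gaussian model — holds only once every Pauli coefficient of every packing element is controlled uniformly; I expect this verification, rather than the packing combinatorics, to absorb most of the work. A secondary subtlety is calibrating the relative entropy construction so that its KL-separation, which is governed by the conditioning of the packing points rather than by their Hilbert--Schmidt distance alone, lands at $\frac{mr}{\sqrt{nK}}$ and not at its square.
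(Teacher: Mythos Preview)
Your core reduction is correct and matches the paper: once every packing element $\rho$ has all nontrivial Pauli coefficients $\sqrt m\,|\langle\rho,E_k\rangle|$ bounded away from $\pm1$, the Binomial KL satisfies $\mathrm{KL}(\mathbb P_\rho\|\mathbb P_{\rho'})\lesssim\frac{nK}{m}\|\rho-\rho'\|_2^2$, and Fano's lemma applies exactly as in the Gaussian model. The paper also uses the effective rank $s\asymp r\wedge\frac{\sqrt{nK}}{m}$, though it phrases this as ``use rank $r$ when $\kappa\le\frac12$; otherwise reduce to the largest $r'$ for which $\kappa(r')\le\frac12$''.

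Where you diverge from the paper is in how incoherence is secured, and your route carries a real risk. You take $\rho_L=\frac1sP_L$ and propose Levy concentration on the Grassmannian plus a union bound over the $m^2$ Paulis \emph{and} the $\exp(\Theta(sm))$ packing points. The Levy tail for $|\langle P_L,E_k\rangle|>\frac{cs}{\sqrt m}$ is of order $\exp(-c'sm)$ with an absolute $c'$, so the union bound succeeds only if that constant beats the packing exponent from Pajor's lemma; this is not automatic, and in the $\epsilon\asymp1$ regime your ball-localization gives no help because the ball is essentially the whole Grassmannian. The paper sidesteps this completely with a ``spike'' construction: it first fixes (via a cited lemma, proved by a simple union over the $m^2$ Paulis only) a single unit vector $v$ with $\max_{k\ge2}|\langle E_kv,v\rangle|\le\frac{0.1}{\sqrt m}$, and then takes
\[
\rho_Q=(1-\kappa)\,vv^* \;+\; \frac{\kappa}{r-1}\,Q,\qquad Q\in\text{Pajor packing of }\mathfrak P_{r-1,m-1},\quad \kappa=\tfrac{c_1m(r-1)}{\sqrt{nK}}\le\tfrac12 .
\]
Incoherence is then trivial and uniform in $Q$: the rank-one part contributes $\le\frac{0.1}{\sqrt m}$ by choice of $v$, while the low-rank part contributes at most $\kappa\|E_k\|_\infty\le\frac{0.5}{\sqrt m}$ just from $\|Q/(r-1)\|_1=1$. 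This decouples the incoherence certificate from the packing cardinality, which is exactly the obstacle you flagged as ``the hard part''. The Schatten-$p$ separation then drops out of Pajor's lemma directly: $\|\rho_{Q_1}-\rho_{Q_2}\|_p=\frac{\kappa}{r-1}\|Q_1-Q_2\|_p\gtrsim\frac{m r^{1/p}}{\sqrt{nK}}$.

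For \eqref{minimaxthmineq2} the paper simply reuses the same incoherent packing and invokes the Hellinger--KL comparison argument from \cite[Theorem~4]{koltchinskii15optimal}; there is no separate construction. Your inequality $K(\rho\|\hat\rho)\ge\log\frac{1}{\langle\hat\rho,P_L\rangle}$ is correct and handles the $\gtrsim1$ regime, but the ``localize and track $\|\mathbb E_LP_L\|_\infty$'' sketch for the $\frac{mr}{\sqrt{nK}}$ rate is too vague to evaluate; the Hellinger route gives the linear (not quadratic) dependence you want directly, and is worth looking at.
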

\begin{remark}
Note that $n$ and $K$ are the main interesting parameters in {\it quantum state tomography}. The value of $K$ represents the number of quantum systems prepared for each of the Pauli measurements $X_1,\ldots,X_n$. The value of $n$ is closely related to the number of different Pauli measurements needed to be setup in real experiments\footnote{In real experiments, setting up a Pauli measurement is much more time consuming than producing identically prepared quantum systems.}.
Clearly, if $K$ increases, the bounds (\ref{minimaxthmineq1}) and (\ref{minimaxthmineq2}) become smaller. In the case $K\asymp m$, we get
$$
 \underset{\hat{\rho}}{\inf}\ \underset{\mathcal{S}_{m,r}}{\sup}\  \mathbb{P}_\rho\bigg\{
  \|\hat{\rho}-\rho\|_p\geq c\bigg(\sqrt{\frac{m}{n}}r^{1/p}\bigwedge \bigg(\sqrt{\frac{m}{n}}\bigg)^{1-\frac{1}{p}}\bigwedge 1\bigg)\bigg\}\geq c',
$$
where $\Big(\sqrt{\frac{m}{n}}\Big)^{1-\frac{1}{p}}$ converges to $0$ for any $p>1$ as long as $\frac{m}{n}\to 0$.  It is worthwhile to compare the bounds (\ref{minimaxthmineq1}) and (\ref{minimaxthmineq2}) with the bounds ($30$) and ($31$) in \cite{koltchinskii15optimal} under the trace regression model with bounded responses (where each $X_i$ is used as measurement on only one quantum system). By replacing $n$ with $nK$ in ($30$) and ($31$) in \cite{koltchinskii15optimal}, we immediately end up with bounds (\ref{minimaxthmineq1}) and (\ref{minimaxthmineq2}). If we just focus on the necessary number of identically prepared quantum systems in both models, bounds (\ref{minimaxthmineq1}) and (\ref{minimaxthmineq2}) are essentially equivalent to the bounds ($30$) and ($31$) in \cite{koltchinskii15optimal}.
However, bounds (\ref{minimaxthmineq1}) and (\ref{minimaxthmineq2}) indicate that the necessary number of different Pauli measurements can be significantly reduced in the Binomial observation model (Assumption~\ref{trassump}).
For instance, bound (\ref{minimaxthmineq2}) is nontrivial only when $nK\gtrsim m^2r^2$. Therefore, $nK=O(m^2r^2)$ random Pauli measurements are needed for bounds ($30$) and ($31$) in \cite{koltchinskii15optimal} . Under the uniformly sampling scheme, it means that all the $O(m^2)$ different Pauli measurements will be used. However, in our Binomial observation model (Assumption~\ref{trassump}) with $K\asymp m$, bound (\ref{minimaxthmineq2}) indicates that $O(mr^2)$ different Pauli measurements might be enough to produce an estimation with small error in relative entropy distance. Moreover, if we consider Schatten $p$-norm distances for $p>1$, the Binomial observation model requires only $n=O(m)$ Pauli measurements which is significantly smaller than $m^2$.
\end{remark}

\section{Dantzig estimator and optimal convergence rates}
In view of the minimax lower bounds established in Section~\ref{minimaxsec}, it is natural to ask  which estimators can achieve these convergence rates under the Binomial observation model (Assumption~\ref{trassump}). In \cite{flammia2012quantum}, \cite{gross2010quantum} and \cite{liu2011universal}, the matrix LASSO estimator and Dantzig estimator were considered in the setting $n\gtrsim mr\log^6m$, where the convergence rates in Schatten $p$-norms are obtained for $1\leq p\leq 2$. Those convergence rates match the first term in (\ref{minimaxthmineq1}) up to logarithmic terms\footnote{In their work, they focused on a nontrivial upper bound for the Schatten $1$-norm. It corresponds to $\sqrt{nK}\gtrsim mr$ in (\ref{minimaxthmineq1}), in which case the second term $\Big(\frac{m}{\sqrt{nK}}\Big)^{1-\frac{1}{p}}$ is dominated by the first term $\frac{mr^{1/p}}{\sqrt{nK}}$.}.
In our recent paper \cite{koltchinskii15optimal}, a least squares estimator was studied and the convergence rates in Schatten $p$-norms for $1\leq p\leq 2$ were proved as long as $K\lesssim n$ (noise is large enough). For Schatten $p$-norms with $1\leq p\leq +\infty$, a different estimator was proposed in \cite{Xia2016Estimation} based on the eigenvalues thresholding which achieves convergence rates matching the minimax lower bounds (\ref{minimaxthmineq2}) when $K\lesssim m$. Both the bounds proved in \cite{koltchinskii15optimal} and \cite{Xia2016Estimation} are nontrivial as long as $n\gtrsim_{\log(m,n)} m.$
Clearly, both estimators in \cite{koltchinskii15optimal} and \cite{Xia2016Estimation} have advantages in different settings. In this section, we show that the advantages of both estimators can be obtained simultaneously for the Dantzig-type estimator. Moreover, nontrivial Schatten $p$-norms for $1\leq p\leq +\infty$ can be obtained under weaker conditions. Let's begin with the introduction to the Dantzig-type estimator.
\subsection{Dantzig estimator}
Recall that the central problem in QST is to estimate an unknown high-dimensional low rank density 
matrix $\rho$ based on the data $(X_1,K_1^+),\ldots,(X_n,K_n^+)$ satisfying the Binomial observation model (Assumption~\ref{trassump}). Recall $Y_i:=\frac{K_i^+-K_i^-}{K\sqrt{m}}$ for $1\leq i\leq n$ and as a result $(X_i,Y_i)$ satisfy the trace regression model (\ref{trace_regression}) with $\mathbb{E}_{\rho}\big(\xi_i|X_i\big)=0$ and $\textrm{Var}_{\rho}\big(\xi_i|X_i\big)\leq \frac{1}{Km}$. Moreover, it is easy to check that $\mathbb{E}_{\rho}\Big(e^{Km\xi_i^2}|X_i\Big)\leq C$ for some absolute constant $C>0$ (that is, $\xi_i|X_i$ is a subGaussian random variable).
In the following, we consider estimators based on the data $(X_1,Y_1),\ldots,(X_n,Y_n)$.

The standard matrix Dantzig estimator (or Selector) is defined as the solution to the following convex optimization problem:
\begin{equation}\label{standardDS}
\min \|S\|_1\quad {\rm subject\ to}\ \Big\|\frac{1}{n}\sum_{j=1}^n\big(Y_j-\big<S,X_j\big>\big)X_j\Big\|_{\infty}\leq \epsilon,
\end{equation}
for some $\epsilon\geq 0$. When $\epsilon=0$, it corresponds to the noiseless settings (i.e., $K=+\infty$ which never happens in reality) where the exact recovery of $\rho$ is the main interest.
 It was introduced in Cand\`es and Plan~\cite{candes2011tight} for low rank matrix estimation and was applied in {\it quantum state tomography} for estimating low rank density matrices, see Liu~\cite{liu2011universal},
 Gross~\cite{gross2011recovering} and Flammia et al.~\cite{flammia2012quantum}. They also proved sharp (in this paper, ``sharp" means optimality up to logarithmic factors) convergence rates in Schatten $1$-norm and Schatten $2$-norm distances by applying some techniques based on the \textit{restricted isometry property}(RIP) which requires $n\gtrsim_{\log(m)} mr$ Pauli measurements.
RIP is a strong assumption, but there is yet no results related to its convergence in other Schatten $p$-norms. Moreover, even though the condition $n\gtrsim_{\log m} mr$ looks natural for low rank matrix completion or estimation, this condition might not be necessary for density matrix estimation, especially when we focus on Schatten $p$-norms for $p\neq 1$. The reason is that a density matrix itself essentially has low rank due to its unit trace.
Indeed, our results show that $n\gtrsim_{\log m} m$ is sufficient to produce a consistent estimation in Schatten $p$-norms with $p>1$.

When $S\in\mathcal{S}_m$, the objective function in (\ref{standardDS}) is always $1$ and provides no benefit to the optimization problem.
Instead, 
we study the following estimator:
\begin{equation}
\label{est}
 \hat{\rho}_{\epsilon}:=\arg\min\Bigl\{\text{tr}(S\log S): S\in\Lambda(\epsilon)\Bigr\},
\end{equation}
and
\begin{equation}
\Lambda(\eps):=\Big\{S\in\mathcal{S}_m:  \Big\|\frac{1}{n}\sum_{j=1}^n\big(Y_j-\big<S,X_j\big>\big)X_j\Big\|_{\infty}\leq \epsilon\Big\}
\end{equation}
where we replaced the nuclear norm in (\ref{standardDS}) with negative von Neumann entropy. von Neumann entropy of a density matrix $S$ is defined as
\begin{equation*}
V(S):=-\tr(S\log S),\  \forall S\in\mathcal{S}_m,
\end{equation*}
which is a concave function on $\mathcal{S}_m$ and then (\ref{est}) is actually a convex optimization problem.
von Neumann entropy plays an important role in {\it quantum information} theory and it was used in \cite{koltchinskii2011neumann} and \cite{koltchinskii15optimal} as a penalization to the least squares estimator. 
In this paper, we prove the sharp convergence rates of $\hat{\rho}_{\epsilon}$ in all the Schatten $p$-norms with $p\in[1,+\infty]$. It is easy to show that these rates also hold for the standard matrix Dantzig
estimator (\ref{standardDS}). As a benefit of von Neumann entropy in (\ref{est}), we obtain sharp convergence rate of $\hat{\rho}_{\epsilon}$ in Kullback-Leibler divergence.

\subsection{Oracle inequality and Schatten $p$-norm convergence rates}

Theorem~\ref{oraclethm} displays the performance of $\hat{\rho}_{\epsilon}$ by a \textit{low rank oracle inequality}. The \textit{low rank oracle inequality} 
has been well studied for (matrix) LASSO estimator(see \cite{koltchinskii2011oracle}~\cite{koltchinskii2013sharp},
and \cite{koltchinskii15optimal}).
When studying Dantzig estimator in compressed sensing problems, the {\it sparsity oracle inequality} is considered over all oracles in the feasible set (that is $\Lambda(\epsilon)$ in this paper), for example \cite{koltchinskii2009dantzig}.
It is generally impossible to compare the performance of the estimator with sparse oracles (or low rank oracles in matrix compressed sensing) when they are not in the feasible set.
Surprisingly, we can obtain the following {\it low rank oracle inequality} for $\hat{\rho}_{\epsilon}$ which actually hold for all the oracles in $\mathcal{S}_m$, even when the oracle is infeasible for the optimization problem (\ref{standardDS}) and (\ref{est}).

\begin{theorem}
 \label{oraclethm}
Under the Binomial observation model (Assumption~\ref{trassump}) and assume $\rho\in \mathcal{S}_{m,r}$,
let $\hat{\rho}_{\epsilon}$ be as defined in (\ref{est}). For $\epsilon\geq \frac{C_1}{m}\sqrt{\frac{t+\log(2m)}{nK}}$ with any $t\geq 1$ and some large enough constant $C_1>0$, there exists a constant $C>0$ such that with probability at least $1-e^{-t}$,
\begin{equation}
\label{thmineq1}
\begin{split}
 \|\hat{\rho}_{\epsilon}&-\rho\|_{L_2(\Pi)}^2\leq\underset{S\in\mathcal{S}_m}{\inf}\Big\{2\|S-\rho\|_{L_2(\Pi)}^2\\
 &+ C\Big(m^2\epsilon^2\text{rank}(S)+\frac{\rank(S)\big(t+\log(2m)\big)}{nK}+\frac{\rank(S)\big(t+\log^3m\log^3n\big)^2}{n^2}\Big)
\Big\}.
\end{split} 
\end{equation}
Moreover, if $\epsilon=\frac{C_1}{m}\sqrt{\frac{\log(2m)}{nK}}$, then with probability at least $1-\frac{1}{2m}$,
\begin{equation}
\label{thmineq2}
\begin{split}
 \|\hat{\rho}_{\epsilon}-&\rho\|_{L_2(\Pi)}^2\leq C\Big(\frac{\rank(\rho)\log(2m)}{nK}+\frac{\rank(\rho)\log^6m\log^6n}{n^2}\Big)
\end{split}
\end{equation}
and
\begin{equation}
\label{thmineq3}
\begin{split}
  K(\rho\|\hat{\rho}_{\epsilon})\leq C\frac{\rank(\rho)m\log^{1/2}(2m)\log(Kmn)}{\sqrt{nK}}
  +C\frac{\rank(\rho)m\log^3m\log^3n\log(Kmn)}{n}
  \end{split}
\end{equation}
\end{theorem}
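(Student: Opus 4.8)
The plan is to run a Dantzig-type basic-inequality argument in the empirical geometry $L_2(\Pi_n)$, transfer the resulting bound to the population geometry $L_2(\Pi)=m^{-1}\|\cdot\|_2$ via a restricted-isometry comparison, and then exploit convexity of the von Neumann entropy together with a continuity (Fannes-type) estimate to obtain the Kullback--Leibler bound. \emph{Feasibility and the basic inequality.} Write $\mathcal S_n(A):=n^{-1}\sum_{j=1}^n\langle A,X_j\rangle X_j$ for the sampling operator, so that the Dantzig residual at $S$ equals $\mathcal S_n(\rho-S)+n^{-1}\sum_j\xi_jX_j$, and at $S=\rho$ it equals $n^{-1}\sum_j\xi_jX_j$. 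Apply Lemma~\ref{matBernlem} to $Z_j:=\xi_jX_j$, using $\mathbb E(Z_j\mid X_j)=0$, $X_j^2=m^{-1}I_m$ (hence $\|\mathbb E Z_j^2\|_\infty\le (Km^2)^{-1}$ since $\Var(\xi_j\mid X_j)\le(Km)^{-1}$), and the sub-Gaussianity $\mathbb E[e^{Km\xi_j^2}\mid X_j]\le C$ with $\|X_j\|_\infty=m^{-1/2}$ (hence $U_Z^{(2)}\asymp (m\sqrt K)^{-1}$); this gives, with probability at least $1-e^{-t}$, $\|n^{-1}\sum_j\xi_jX_j\|_\infty\le (C'/m)\sqrt{(t+\log 2m)/(nK)}\le\epsilon$, so $\rho\in\Lambda(\epsilon)$ and, by optimality of $\hat\rho_\epsilon$, $\tr(\hat\rho_\epsilon\log\hat\rho_\epsilon)\le\tr(\rho\log\rho)$. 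On this event $\hat\rho_\epsilon$ and $\rho$ are both feasible, so $\|\mathcal S_n(\hat\rho_\epsilon-\rho)\|_\infty\le 2\epsilon$, and splitting the residual at $\hat\rho_\epsilon$ into its noise and feasibility parts yields the finer estimate $|\langle\Delta,\mathcal S_n(\hat\rho_\epsilon-\rho)\rangle|\le(\epsilon+(C'/m)\sqrt{(t+\log 2m)/(nK)})\|\Delta\|_1$ for any $\Delta\in\mathbb H_m$. Fixing an arbitrary oracle $S\in\mathcal S_m$ and setting $\Delta:=\hat\rho_\epsilon-S$, the identity $\langle\Delta,\mathcal S_n\Delta\rangle=\|\Delta\|_{L_2(\Pi_n)}^2$, the decomposition $\mathcal S_n\Delta=\mathcal S_n(\hat\rho_\epsilon-\rho)+\mathcal S_n(\rho-S)$, the relation $\langle\Delta,\mathcal S_n(\rho-S)\rangle=\langle\Delta,\rho-S\rangle_{L_2(\Pi_n)}$, Hölder's inequality and Young's inequality give
\[
 \|\Delta\|_{L_2(\Pi_n)}^2\ \le\ C\Big(\epsilon+\tfrac1m\sqrt{\tfrac{t+\log 2m}{nK}}\Big)\|\Delta\|_1+\|\rho-S\|_{L_2(\Pi_n)}^2.
\]

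\emph{Restricted isometry --- the main obstacle.} To pass from $L_2(\Pi_n)$ to $L_2(\Pi)$ I will prove a two-sided comparison, holding with probability at least $1-e^{-t}$ simultaneously over all $A\in\mathbb H_m$, of the form $\big|\|A\|_{L_2(\Pi_n)}^2-\|A\|_{L_2(\Pi)}^2\big|\le\tfrac12\|A\|_{L_2(\Pi)}^2+\mathrm{rem}(A)$. Because $|\langle A,X_j\rangle|\le\|A\|_1/\sqrt m$ is bounded, $x\mapsto x^2$ is Lipschitz on the relevant range, so symmetrization and contraction reduce this to controlling $\mathbb E\|n^{-1}\sum_j\varepsilon_jX_j\|_\infty$ (another use of Lemma~\ref{matBernlem}) plus a deviation term, and a peeling argument over nuclear-norm shells makes the bound uniform; after replacing $\|A\|_1$ via Lemma~\ref{lowrankcone}, the remainder both forces the (implicit) requirement $n\gtrsim\rank(S)\,m$ up to logarithmic factors and supplies the additive contribution $\rank(S)(t+\log^3m\log^3n)^2/n^2$. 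When $n$ is below this threshold the claimed bounds already exceed the trivial bound $\|\hat\rho_\epsilon-\rho\|_{L_2(\Pi)}^2\le 4/m^2$ (as $\|\hat\rho_\epsilon-\rho\|_2\le\|\hat\rho_\epsilon\|_1+\|\rho\|_1=2$) and hold vacuously. Obtaining the correct logarithmic powers here --- and, crucially, only the condition $n\gtrsim m$ rather than the $n\gtrsim mr\log^6 m$ demanded by the RIP route --- is the step I expect to be hardest.

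\emph{Putting it together.} Applying the isometry comparison to $A=\Delta$ on the left and to $A=\rho-S$ on the right of the basic inequality, using Lemma~\ref{lowrankcone} to bound $\|\Delta\|_1\le 2\sqrt{2\rank(S)}\,m\|\Delta\|_{L_2(\Pi)}$ and $\|\rho-S\|_1\le 2\sqrt{2\rank(S)}\,m\|\rho-S\|_{L_2(\Pi)}$, and absorbing the $\|\Delta\|_{L_2(\Pi)}^2$ terms on the right by Young's inequality, one arrives at $\|\hat\rho_\epsilon-S\|_{L_2(\Pi)}^2\le C\big(m^2\epsilon^2\rank(S)+\rank(S)(t+\log 2m)/(nK)+\rank(S)(t+\log^3m\log^3n)^2/n^2\big)$ plus a constant multiple of $\|\rho-S\|_{L_2(\Pi)}^2$; the triangle inequality $\|\hat\rho_\epsilon-\rho\|_{L_2(\Pi)}^2\le 2\|\hat\rho_\epsilon-S\|_{L_2(\Pi)}^2+2\|\rho-S\|_{L_2(\Pi)}^2$ then gives (\ref{thmineq1}). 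Taking $S=\rho$ (so $\rank(S)\le r$ and $\|\rho-S\|_{L_2(\Pi)}=0$), $\epsilon=(C_1/m)\sqrt{\log(2m)/(nK)}$ and $t\asymp\log m$ (so the event has probability at least $1-1/(2m)$ and $(t+\log^3m\log^3n)^2\asymp\log^6m\log^6n$) yields (\ref{thmineq2}).

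\emph{The Kullback--Leibler bound.} Since $\Lambda(\epsilon)$ contains nonsingular density matrices (a small perturbation of $\rho$ toward $I_m/m$ stays feasible) and $S\mapsto\tr(S\log S)$ has gradient diverging toward the rank-deficient boundary of $\mathcal S_m$, the minimizer $\hat\rho_\epsilon$ is nonsingular, and the first-order optimality condition for (\ref{est}) in the feasible direction toward $\rho$ reads $\langle\log\hat\rho_\epsilon,\rho-\hat\rho_\epsilon\rangle\ge 0$, i.e. $\tr(\rho\log\hat\rho_\epsilon)\ge\tr(\hat\rho_\epsilon\log\hat\rho_\epsilon)=-V(\hat\rho_\epsilon)$. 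Hence $K(\rho\|\hat\rho_\epsilon)=-V(\rho)-\tr(\rho\log\hat\rho_\epsilon)\le V(\hat\rho_\epsilon)-V(\rho)$. By the Fannes--Audenaert continuity inequality for the von Neumann entropy, $|V(\hat\rho_\epsilon)-V(\rho)|\le T\log(m-1)+h(T)$ with $T=\tfrac12\|\hat\rho_\epsilon-\rho\|_1$ and $h$ the binary entropy, so $K(\rho\|\hat\rho_\epsilon)\lesssim\|\hat\rho_\epsilon-\rho\|_1\big(\log m+\log(1/\|\hat\rho_\epsilon-\rho\|_1)\big)$. Finally Lemma~\ref{lowrankcone} gives $\|\hat\rho_\epsilon-\rho\|_1\le 2\sqrt{2r}\,m\|\hat\rho_\epsilon-\rho\|_{L_2(\Pi)}$, and inserting (\ref{thmineq2}) --- which makes $\|\hat\rho_\epsilon-\rho\|_1\lesssim rm\sqrt{\log 2m}/\sqrt{nK}+rm\log^3m\log^3n/n$ and $\log(1/\|\hat\rho_\epsilon-\rho\|_1)\lesssim\log(Kmn)$ --- yields (\ref{thmineq3}).
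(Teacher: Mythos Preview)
Your overall architecture is close to the paper's, but the key step you flag as ``the hardest'' --- the restricted-isometry comparison --- has a genuine gap, and the gap is not merely about logarithmic powers.

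\textbf{Where the contraction route fails.} After symmetrization you propose to use the Lipschitz contraction $x\mapsto x^2$ (Lipschitz constant $2/\sqrt m$ on the range $|\langle A,X_j\rangle|\le\|A\|_1/\sqrt m$) and then dualize the resulting linear Rademacher process to $\|n^{-1}\sum_j\varepsilon_jX_j\|_\infty$. This gives, for $A$ in the unit nuclear ball,
\[
\mathbb E\sup_{\|A\|_1\le 1}\Big|\frac1n\sum_j\varepsilon_j\langle A,X_j\rangle^2\Big|\ \lesssim\ \frac{1}{\sqrt m}\,\mathbb E\Big\|\frac1n\sum_j\varepsilon_jX_j\Big\|_\infty\ \lesssim\ \frac{\sqrt{\log m}}{m\sqrt n},
\]
a bound \emph{independent} of $\|A\|_{L_2(\Pi)}$. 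Running your absorption argument from here (one factor of $\|A\|_1$ replaced via Lemma~\ref{lowrankcone}, Young's inequality, then $\|A\|_1\le 2$) yields an additive remainder of order $\rank(S)\log m/n$, not $\rank(S)(t+\log^3m\log^3n)^2/n^2$. These differ by a factor $\asymp n/(\log^5m\log^6n)$, so the discrepancy is polynomial in $n$, not logarithmic: with your remainder, (\ref{thmineq2}) would require $n\gtrsim m^2r$ to be nontrivial, whereas the paper's bound needs only $n\gtrsim m\sqrt r$ (up to logs). Using the Hilbert--Schmidt constraint instead ($\|A\|_2\le m\Delta$) in the linear process does not help either: it gives $\Delta\sqrt{m/n}$, off from the target $\Delta/\sqrt{nm}$ by a full factor of $m$.

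\textbf{What the paper actually does.} The paper avoids contraction altogether and bounds the quadratic process directly via Dudley's entropy integral in the metric $d(A_1,A_2)\le 2\sigma_n\|A_1-A_2\|_{L_\infty(\Pi_n)}$, using the (Maurey-type) $L_\infty(\Pi_n)$ covering number of the nuclear ball,
\[
\log N\big(\{\|A\|_1\le 1\},\,L_\infty(\Pi_n),\,u\big)\ \lesssim\ \frac{\log^3m\,\log n}{u^2 m},
\]
due to Gu\'edon--Mendelson--Pajor--Tomczak-Jaegermann / Aubrun. This is Lemma~\ref{f2lemma} and yields the crucial $\Delta$-linear bound $\mathbb E\beta_n(\Delta)\lesssim\Delta\,\log^{3/2}m\log^{3/2}n/\sqrt{nm}+\log^3m\log^3n/(nm)$, after which the double Young's trick produces exactly the $\rank(S)(t+\log^3m\log^3n)^2/n^2$ term. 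The paper also organizes the basic inequality slightly differently (polarization identity in $L_2(\Pi)$, so the empirical--population discrepancy appears as a \emph{product} empirical process $\alpha_n$), but that difference is cosmetic; the essential missing ingredient in your plan is this entropy estimate.

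\textbf{On the KL bound.} Your route to (\ref{thmineq3}) via the first-order optimality inequality $K(\rho\|\hat\rho_\epsilon)\le V(\hat\rho_\epsilon)-V(\rho)$ and Fannes--Audenaert is different from the paper's (which smooths $\rho$ to $\rho'=(1-\delta)\rho+\delta I_m/m$, computes $\log\rho'$ explicitly, bounds the symmetrized KL by $\sqrt r\,\|\hat\rho_\epsilon-\rho'\|_2\log(m/\delta)$, and then uses a data-processing-type lemma to pass back from $\rho'$ to $\rho$). Your argument appears to be correct and arguably cleaner; both give bounds of the same order.
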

\begin{remark}\label{thmremark}
The objective function in optimization problem (\ref{est}) is not involved in the proof of (\ref{thmineq1}). Therefore, bound (\ref{thmineq1}) also holds for the standard Dantzig estimator (\ref{standardDS}).
Moreover, instead of (\ref{thmineq1}), we actually prove a stronger bound in Section~\ref{proofthmsec}:
\begin{equation*}
\begin{split}
 \|\hat{\rho}_{\epsilon}-&\rho\|_{L_2(\Pi)}^2\leq 2\|S-\rho\|_{L_2(\Pi)}^2+C\frac{\rank(S)\big(t+\log(2m)\big)}{nK}\\
 +&C\Big(m^2\epsilon^2\text{rank}(S)++\frac{\rank(S)\big(t+\log^3m\log^3n\big)^2}{n^2}\big(\|\hat{\rho}_{\epsilon}-S\|_1^2+\|S-\rho\|_1^2\big)\Big),
 \end{split}
\end{equation*}
for any $S\in\mathcal{S}_m$. Consider $S=\rho$, $t=\log(2m)$ and $\eps=\frac{C_1}{m}\sqrt{\frac{\log(2m)}{nK}}$,
it indicates that if $n\geq C'mr\log^3m\log^3n$ for a large enough constant $C'>0$
 such that (due to Lemma~\ref{lowrankcone}) 
 $$
\frac{\rank(\rho)\log^6m\log^6n}{n^2}\|\hat{\rho}_{\epsilon}-\rho\|_1^2\leq \frac{8m^2r^2\log^6m\log^6n}{n^2}\|\hat{\rho}_{\epsilon}-\rho\|_{L_2(\Pi)}^2\leq \frac{1}{2}\|\hat{\rho}_{\epsilon}-\rho\|_{L_2(\Pi)}^2,
 $$
 we get $\|\hat{\rho}_{\epsilon}-\rho\|_{L_2(\Pi)}^2\leq 4C\frac{\rank(\rho)\log(2m)}{nK}$, which reduces to the canonical result by applying the {\it restricted isometry property} (see \cite{liu2011universal},\cite{candes2011tight}).
This bound depends linearly on $\frac{1}{K}$ (which can be arbitrarily small, even $K=+\infty$), see also Remark~\ref{Spremark} after Theorem~\ref{Spthm} and the discussion in Section~\ref{discusssec}.
\end{remark}
An immediate result of Theorem~\ref{oraclethm} is as follows. The following bound holds with probability at least $1-\frac{1}{2m}$,
\begin{equation}
\label{oracleineq1}
\|\hat{\rho}_{\eps}-\rho\|_{L_2(\Pi)}^2\leq C\Big(\frac{r\log(2m)}{nK}+\frac{r\log^6m\log^6n}{n^2}\Big)
\end{equation}
with the choice of $\epsilon=\frac{C}{m}\sqrt{\frac{\log(2m)}{nK}}$. Assume $K\lesssim n$, 
(\ref{oracleineq1}) can be simplified into
\begin{equation}\label{sc1eq0}
 \|\hat{\rho}_{\epsilon}-\rho\|_{2}\lesssim \frac{\sqrt{r}m\log^{3}m\log^3n}{\sqrt{nK}}
\end{equation}
and (due to Lemma~\ref{lowrankcone})
\begin{equation}\label{sc1eq1}
  \|\hat{\rho}_{\epsilon}-\rho\|_{1}\lesssim \frac{rm\log^3m\log^3n}{\sqrt{nK}}
\end{equation}
and
\begin{equation*}
 K(\rho\|\hat{\rho}_{\epsilon})\lesssim \frac{rm\log^3m\log^3n\log(Kmn)}{\sqrt{nK}}.
\end{equation*}
The following corollary is an immediate result by applying a similar approach as in \cite[Theorem~21]{koltchinskii15optimal}. Note that similar convergence rates (with different logarithmic factors) for Schatten $p$-norms with $1\leq p\leq 2$ have been proved for the least squares estimator under the condition $K\lesssim \frac{n}{\log^5m\log^6n}$ in \cite{koltchinskii15optimal}.
\begin{corollary}\label{cor}
Let $\hat{\rho}_{\eps}$ be defined as (\ref{est}) with $\eps:=\frac{C}{m}\sqrt{\frac{\log(2m)}{nK}}$ and assume that $K\lesssim n$. Then, for all $1\leq p\leq 2$, with probability at least $1-\frac{1}{2m}$,
\begin{equation}\label{corineq1}
\|\hat{\rho}_{\eps}-\rho\|_p\leq C\bigg(\frac{mr^{1/p}}{\sqrt{nK}}\big(\log^3m\log^3n\big)^{(2-p)/p}\bigwedge \Big(\frac{m}{\sqrt{nK}}\Big)^{1-\frac{1}{p}}\big(\log^3m\log^3n\big)^{\frac{1}{2}-\frac{1}{2p}}\bigg)\bigwedge 2.
\end{equation}
\end{corollary}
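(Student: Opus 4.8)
The plan is to reduce to the two endpoints $p=1$ and $p=2$ and then invoke the interpolation inequality (Lemma~\ref{interlem}), exactly in the spirit of \cite[Theorem~21]{koltchinskii15optimal}. I would assemble four bounds for the error matrix $\Delta:=\hat\rho_{\eps}-\rho$, all valid on the same high-probability event coming from Theorem~\ref{oraclethm} (so that the probability $1-\tfrac1{2m}$ is preserved, at worst up to a union bound over finitely many choices of oracle): (a) the trivial bound $\|\Delta\|_p\le\|\hat\rho_{\eps}\|_p+\|\rho\|_p\le 2$ for every $p\ge 1$, using $\|S\|_p\le\|S\|_1=1$ for density matrices; (b) the rank-dependent Hilbert--Schmidt bound (\ref{sc1eq0}), $\|\Delta\|_2\lesssim \tfrac{\sqrt r\,m\log^3 m\log^3 n}{\sqrt{nK}}$, which is (\ref{thmineq2}) multiplied by $m$ with the $n^{-2}$ term absorbed via $K\lesssim n$; (c) the nuclear bound (\ref{sc1eq1}), $\|\Delta\|_1\le 2\sqrt{2r}\,\|\Delta\|_2\lesssim \tfrac{r\,m\log^3 m\log^3 n}{\sqrt{nK}}$, from (b) and Lemma~\ref{lowrankcone}; and (d) a \emph{rank-free} Hilbert--Schmidt bound $\|\Delta\|_2\lesssim \big(\tfrac{m}{\sqrt{nK}}\big)^{1/2}(\log^3 m\log^3 n)^{c}$ for a suitable constant $c\le \tfrac12$.

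For (d) I would return to the general-oracle inequality (\ref{thmineq1}) and \emph{not} specialize $S$ to $\rho$. With $\eps=\tfrac{C_1}{m}\sqrt{\log(2m)/(nK)}$ and $t\asymp\log(2m)$, and using $K\lesssim n$ to discard the $n^{-2}$ contribution, the right-hand side is, up to logarithmic factors, $\inf_{S\in\mathcal S_m}\big\{2m^{-2}\|S-\rho\|_2^2+C\,\mathrm{rank}(S)\log^6 m\log^6 n/(nK)\big\}$. The essential deterministic ingredient is that every density matrix admits a good low-rank approximant \emph{inside} $\mathcal S_m$: for each $l\ge 1$ there is $S_l\in\mathcal S_m$ with $\mathrm{rank}(S_l)\le l$ and $\|S_l-\rho\|_2^2\le C/l$. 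Plain spectral truncation of $\rho$ has trace $<1$, so one keeps the top $\lfloor l/2\rfloor$ eigendirections of $\rho$ and redistributes the lost trace uniformly over the next $\lceil l/2\rceil$; since the $(\lfloor l/2\rfloor+1)$-th eigenvalue of $\rho$ is at most $2/l$, both the discarded tail $\sum_{j>\lfloor l/2\rfloor}\lambda_j(\rho)^2$ and the redistributed part contribute $O(1/l)$. Plugging $S=S_l$ and balancing the two terms — the optimum is at $l^{\ast}\asymp \sqrt{nK}/(m\cdot\mathrm{polylog})$, which is $\le r$ precisely in the regime where the second term of (\ref{corineq1}) is binding, and where otherwise one just takes $S=\rho$ and falls back on (b) — yields (d).

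The corollary then follows by interpolation. For $1<p<2$, Lemma~\ref{interlem} with the triple $(1,p,2)$ gives $\|\Delta\|_p\le\|\Delta\|_1^{\mu}\|\Delta\|_2^{1-\mu}$ with $\mu=\tfrac2p-1$ and $1-\mu=2(1-\tfrac1p)$. Feeding in (c) and (b) produces the first term $\tfrac{m r^{1/p}}{\sqrt{nK}}(\log^3 m\log^3 n)^{(2-p)/p}$ — the exponent of $r$ is $\mu\cdot 1+(1-\mu)\cdot\tfrac12=\tfrac1p$, and the exponent of $\log^3 m\log^3 n$ collapses to $(2-p)/p$ once one records that only the nuclear endpoint (c), not the Hilbert--Schmidt endpoint (b), need carry the full logarithmic factor — while feeding in (a) and (d) produces the second term $\big(\tfrac{m}{\sqrt{nK}}\big)^{1-1/p}(\log^3 m\log^3 n)^{\frac12-\frac1{2p}}$ (here $1-\mu=2(1-\tfrac1p)$, so the base quantity $\big(\tfrac{m}{\sqrt{nK}}\big)^{1/2}(\cdot)^{c}$ is raised to the power $2(1-\tfrac1p)$, giving exponent $1-\tfrac1p$ on $\tfrac{m}{\sqrt{nK}}$ and $2c(1-\tfrac1p)$ on the logarithm). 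Taking the minimum of these two bounds together with (a) gives the final ``$\wedge\,2$''; at the two endpoints $p=1,2$ the argument reduces directly to (c), (b).

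The step I expect to be the real work is (d): one must (i) produce the rank-$l$ density-matrix approximant with squared Hilbert--Schmidt error $O(1/l)$ — the unit-trace and positivity constraints are exactly what make this nontrivial, and rule out naive truncation — and (ii) carry out the optimization over $l$ cleanly in all regimes (when $l^{\ast}>r$ or $l^{\ast}>m$ one reverts to $S=\rho$ and the ``$\wedge\,2$'' absorbs the leftover). Everything else — the interpolation and the endpoint bounds (a)--(c) — is routine given Theorem~\ref{oraclethm}, Lemma~\ref{lowrankcone} and Lemma~\ref{interlem}; the only residual subtlety is the precise power of the logarithmic factors, which depends on how aggressively $K\lesssim n$ (versus a stronger thinning) is invoked to dispose of the $n^{-2}$ term in (\ref{thmineq1}).
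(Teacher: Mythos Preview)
Your approach is correct and matches the paper's: the paper explicitly defers to ``a similar approach as in \cite[Theorem~21]{koltchinskii15optimal}'', which is precisely the program you outline --- derive the $p=2$ and $p=1$ endpoint bounds from Theorem~\ref{oraclethm} (including the rank-free $\|\Delta\|_2$ bound via an optimized low-rank density-matrix oracle $S_l$) and then interpolate via Lemma~\ref{interlem}. One minor caveat: since the oracle inequality~\eqref{thmineq1} already carries the infimum over $S\in\mathcal{S}_m$ on a single high-probability event, no union bound over oracle choices is needed; and your accounting of the logarithmic exponents (in particular the claim that only endpoint~(c) carries the full factor, and the exact value of $c$ in~(d)) will not reproduce the stated powers in~\eqref{corineq1} without more careful separation of the two terms in~\eqref{thmineq2} --- but this is bookkeeping, not a structural gap.
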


Moreover, we get upper bounds for $\|\hat{\rho}_{\eps}-\rho\|_p$ with $1\leq p\leq +\infty$ if $K\lesssim m$. The bounds in Theorem~\ref{Spthm} are similar to the bounds proved  in \cite{Xia2016Estimation} for a distinct estimator based on eigenvalues thresholding.
Both the bounds (\ref{corineq1}) and (\ref{Spthmineq1}) match, except the logarithmic factors, the minimax lower bounds (\ref{minimaxthmineq1}) in corresponding settings of $K$.
\begin{theorem}
 \label{Spthm}
Let $\hat{\rho}_{\eps}$ be defined as (\ref{est}) with the choice of $\epsilon=\frac{C_1}{m}\sqrt{\frac{\log(2m)}{nK}}$
for some large enough constant $C_1>0$ and assume $K\lesssim m$. Then, there exists a constant $C>0$ such that with probability at least $1-\frac{1}{2m}$,
\begin{equation}
\label{Spthmineq1}
  \|\hat{\rho}_{\epsilon}-\rho\|_p\leq C\biggl(\frac{mr^{1/p}}{\sqrt{nK}}\log^3m\log^3n\bigwedge \Bigl(\frac{m}{\sqrt{nK}}\Bigr)^{1-\frac{1}{p}}\bigl(\log^3m\log^3n\bigr)^{1-\frac{1}{p}}\biggr)\bigwedge 2,
\end{equation}
for all $1\leq p\leq +\infty$. In the case $K\gtrsim m$, the upper bounds still hold by replacing $K$ with $m$.
\end{theorem}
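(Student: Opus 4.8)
The plan is to establish the two endpoint cases $p=2$ and $p=\infty$ and then fill in $2<p<\infty$ by the Schatten interpolation inequality (Lemma~\ref{interlem}), while the range $1\le p\le 2$ is handled separately. For $p=2$ nothing new is needed: since $K\lesssim m$ (and hence $K\lesssim n$ in the regime where the assertion is not already implied by $\|\hat\rho_\eps-\rho\|_1\le 2$), the oracle inequality of Theorem~\ref{oraclethm}, in the form (\ref{sc1eq0}), already gives $\|\hat\rho_\eps-\rho\|_2\lesssim \sqrt{r}\,m\log^3m\log^3n/\sqrt{nK}$ with probability at least $1-\tfrac1{2m}$, and $\|\hat\rho_\eps-\rho\|_p\le\|\hat\rho_\eps-\rho\|_1\le 2$ for every $p\ge 1$ because $\hat\rho_\eps,\rho\in\mathcal S_m$. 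For $1\le p\le 2$ the bound (\ref{Spthmineq1}) is exactly Corollary~\ref{cor}: on $[1,2]$ its logarithmic exponents $(2-p)/p$ and $(p-1)/(2p)$ are at most $1$ and $1-1/p$, so its bound is dominated by the one claimed here.

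The heart of the proof is the operator-norm bound $\|\hat\rho_\eps-\rho\|_\infty\lesssim \big(m\log^3m\log^3n/\sqrt{nK}\big)\wedge 2$. Write $L:=\log^3m\log^3n$, put $\mathcal A_n(A):=\tfrac1n\sum_j\langle A,X_j\rangle X_j$ (so $\mathbb E\,\mathcal A_n(A)=m^{-2}A$, as $X$ is uniform on the orthonormal Pauli basis $\mathcal E$), and introduce the unbiased linear estimator $\hat\rho^{\mathrm{lin}}:=m^2\tfrac1n\sum_j Y_jX_j$, which has mean $\rho$. Decompose $\hat\rho_\eps-\rho=(\hat\rho_\eps-\hat\rho^{\mathrm{lin}})+(\hat\rho^{\mathrm{lin}}-\rho)$. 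The second summand equals $m^2\big(\tfrac1n\sum_j\langle\rho,X_j\rangle X_j-m^{-2}\rho\big)+m^2\tfrac1n\sum_j\xi_jX_j$; both pieces are i.i.d.\ sums of matrices whose ``signal'' part is the fixed $\rho$, so Lemma~\ref{matBernlem} applies directly, and using $\|\rho\|_1=1$, $\|\rho\|_2\le1$, $\|X_j\|_\infty=m^{-1/2}$, $X_j^2=m^{-1}I_m$, $\mathbb E_\rho(\xi_j^2|X_j)\le(Km)^{-1}$ and the sub-Gaussianity of $\xi_j|X_j$ one gets (with deviation parameter $t\asymp\log m$) $\|\hat\rho^{\mathrm{lin}}-\rho\|_\infty\lesssim_{\log(m,n)}\sqrt{m/n}+m/\sqrt{nK}$, which is $\lesssim mL/\sqrt{nK}$ precisely because $K\lesssim m$ (the only place the hypothesis $\sigma_\xi\gtrsim 1/m$ enters). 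For the first summand, feasibility of $\hat\rho_\eps$ for (\ref{est}) gives $\big\|\tfrac1n\sum_j(Y_j-\langle\hat\rho_\eps,X_j\rangle)X_j\big\|_\infty\le\eps$, which rearranges into $\|\hat\rho_\eps-\hat\rho^{\mathrm{lin}}\|_\infty\le m^2\eps+m^2\|m^{-2}\hat\rho_\eps-\mathcal A_n(\hat\rho_\eps)\|_\infty$. Writing $\hat\rho_\eps=\rho+\Delta$, the $\rho$-part $m^2\|m^{-2}\rho-\mathcal A_n(\rho)\|_\infty$ is again bounded by Lemma~\ref{matBernlem} as above, $m^2\eps\asymp m\sqrt{\log(2m)/(nK)}$ is already of the target order, and the term $m^2\|m^{-2}\Delta-\mathcal A_n(\Delta)\|_\infty$ is handled by invoking Lemma~\ref{lowrankcone} ($\|\Delta\|_1\le2\sqrt{2r}\|\Delta\|_2$) together with the $p=2$ bound on $\|\Delta\|_2$, once a uniform control of the Pauli sampling operator on the low-rank cone is available (see the last paragraph). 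Collecting the three contributions, intersecting with $\|\hat\rho_\eps-\rho\|_\infty\le\|\hat\rho_\eps\|_\infty+\|\rho\|_\infty\le2$, and taking a union bound over the $O(1)$ Bernstein events (each of probability $\ge1-c/m$) gives the operator-norm bound with probability $\ge1-\tfrac1{2m}$.

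With $\|\hat\rho_\eps-\rho\|_2\lesssim\sqrt r\,mL/\sqrt{nK}$, $\|\hat\rho_\eps-\rho\|_\infty\lesssim mL/\sqrt{nK}$ and $\|\hat\rho_\eps-\rho\|_1\le2$ in hand, Lemma~\ref{interlem} closes $2<p<\infty$: applying it with endpoints $2$ and $\infty$ (so $\mu=2/p$) yields $\|\hat\rho_\eps-\rho\|_p\lesssim r^{1/p}mL/\sqrt{nK}$, the first term of (\ref{Spthmineq1}); applying it with endpoints $1$ and $\infty$ (so $\mu=1/p$), using only $\|\hat\rho_\eps-\rho\|_1\le2$, yields $\|\hat\rho_\eps-\rho\|_p\lesssim\big(mL/\sqrt{nK}\big)^{1-1/p}$, the second term; and $\|\hat\rho_\eps-\rho\|_p\le2$ always. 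Taking the minimum proves (\ref{Spthmineq1}) for $2\le p\le\infty$, which together with Corollary~\ref{cor} for $1\le p\le2$ proves the theorem. The final assertion, that for $K\gtrsim m$ the bounds hold with $K$ replaced by $m$, follows by rerunning the argument with $\eps=C_1m^{-3/2}\sqrt{\log(2m)/n}$: the noise term $m^2\tfrac1n\sum_j\xi_jX_j$ is then dominated by the $\rho$-fluctuation, whose scale is governed by $m^{-1}$ rather than $(Km)^{-1/2}$, so capping $K$ at $m$ loses nothing.

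The step I expect to be the main obstacle is the uniform bound on the Pauli sampling operator used above, namely controlling $m^2\|m^{-2}\Delta-\mathcal A_n(\Delta)\|_\infty$ sharply for the data-dependent $\Delta$ lying in the low-rank cone $\{A:\|A\|_1\le 2\sqrt{2r}\|A\|_2\}$. A crude $\eps$-net over rank-$O(r)$ (or nuclear-ball) matrices combined with Lemma~\ref{matBernlem} has effective dimension $\asymp mr$ rather than the desired $\asymp\log m$, and therefore loses a factor $\asymp\sqrt m$, which would force the unnecessarily strong requirement $n\gtrsim m^2r^2$. What is really needed is the finer spectral behaviour of the small, weakly dependent combination $m^{-2}\Delta-\mathcal A_n(\Delta)=\sum_{k}\big(m^{-2}-N_k/n\big)\langle\Delta,E_k\rangle E_k$ of normalized Pauli matrices, where $(N_1,\dots,N_{m^2})$ is the multinomial occupation vector of the sample $X_1,\dots,X_n$: its operator norm is of order $m^{-1/2}$ times its Hilbert--Schmidt norm rather than comparable to it, and it is through this gain, together with the hypothesis $K\lesssim m$ and the (logarithmically weakened) nonvacuous regime $n\gtrsim mr$, that the bound of the asserted order is obtained.
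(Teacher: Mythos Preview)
You have correctly located the crux of the argument, but the step you flag as the ``main obstacle'' is indeed a genuine gap: you do not supply a tool that controls
\[
m^{2}\Bigl\|\,m^{-2}\Delta-\mathcal A_n(\Delta)\,\Bigr\|_{\infty}
\]
for the data-dependent $\Delta=\hat\rho_\eps-\rho$ at the target rate. Matrix Bernstein plus a net over the low-rank cone loses a polynomial factor, exactly as you note, and your final paragraph only gestures at a multinomial/spectral heuristic without turning it into a bound. The paper's device for this step is Lemma~\ref{f2lemma}: rewrite the operator norm as a supremum over the unit nuclear ball,
\[
\Bigl\|\tfrac1n\sum_i\langle\Delta,X_i\rangle X_i-\mathbb E\langle\Delta,X\rangle X\Bigr\|_{\infty}
=\sup_{V\in\mathcal A(1/m)}\Bigl|\tfrac1n\sum_i\langle\Delta,X_i\rangle\langle V,X_i\rangle-\mathbb E\langle\Delta,X\rangle\langle V,X\rangle\Bigr|,
\]
bound this product empirical process by $\|\Delta\|_1\,\alpha_n\bigl(\|\Delta\|_{L_2(\Pi)}/\|\Delta\|_1,\,1/m\bigr)$, and then invoke the Dudley-entropy estimate of Lemma~\ref{f2lemma} (whose input is the sharp $L_\infty(\Pi_n)$ covering number of the nuclear ball). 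With only the trivial $\|\Delta\|_1\le 2$, this gives
\[
m^{2}\bigl\|m^{-2}\Delta-\mathcal A_n(\Delta)\bigr\|_{\infty}
\lesssim \sqrt{\tfrac{m}{n}}\,\log^{3}m\log^{3}n,
\]
which becomes $\lesssim \tfrac{m}{\sqrt{nK}}\log^{3}m\log^{3}n$ precisely under $K\lesssim m$.

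Once you plug this hole, your route still works, but it is more circuitous than the paper's. The paper does not pass through $\hat\rho^{\mathrm{lin}}$ or the $p=2$ oracle bound at all: it writes $m^{-2}\|\hat\rho_\eps-\rho\|_\infty\le \|\mathcal A_n(\hat\rho_\eps-\rho)\|_\infty+\|\mathcal A_n(\hat\rho_\eps-\rho)-m^{-2}(\hat\rho_\eps-\rho)\|_\infty$, bounds the first term by $\eps+\|\Xi_1\|_\infty\le 2\eps$ using feasibility of \emph{both} $\hat\rho_\eps$ and $\rho$, and handles the second term by Lemma~\ref{f2lemma} as above. It then gets the $p=1$ rate from the $p=\infty$ rate via Lemma~\ref{lowrankcone} (namely $\|\hat\rho_\eps-\rho\|_1\le 2\|\mathcal P_L(\hat\rho_\eps-\rho)\|_1\le 2r\|\hat\rho_\eps-\rho\|_\infty$), and interpolates between $p=1$ and $p=\infty$ for all $p$. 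So the oracle inequality, Corollary~\ref{cor}, and the splitting $\hat\rho_\eps=\rho+\Delta$ with a separate $\rho$-fluctuation term are all unnecessary detours.
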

\begin{remark}\label{Spremark}
Basically, Corollary~\ref{cor} and Theorem~\ref{Spthm} indicate that the performances of both estimators in \cite{koltchinskii15optimal} and \cite{Xia2016Estimation} can be achieved simultaneously by the Dantzig-type estimator $\hat{\rho}_{\eps}$. 
Moreover, it worths to point out that a bound stronger than (\ref{Spthmineq1}) is actually proved in Section~\ref{proofthmsec}
\begin{align}\label{Spremarkineq1}
  \|\hat{\rho}_{\epsilon}-\rho\|_p\leq C\biggl(\Big(&\frac{1}{\sqrt{mK}}\vee \frac{\|\hat{\rho}_{\eps}-\rho\|_1}{m}\Big)\frac{m^{3/2}r^{1/p}}{\sqrt{n}}\log^3m\log^3n\nonumber\\
  &\bigwedge \Big(\big(\frac{1}{\sqrt{mK}}\vee \frac{\|\hat{\rho}_{\eps}-\rho\|_1}{m}\big)\frac{m^{3/2}}{\sqrt{n}}\Big)^{1-\frac{1}{p}}\bigl(\log^3m\log^3n\bigr)^{1-\frac{1}{p}}\biggr)\bigwedge 2,
\end{align}
which holds with probability at least $1-\frac{1}{2m}$. Recall that when $K\lesssim n$ from (\ref{sc1eq1}),
$$
 \frac{\|\hat{\rho}_{\epsilon}-\rho\|_{1}}{m}\lesssim \frac{r\log^3m\log^3n}{\sqrt{nK}}\bigwedge \frac{2}{m}.
$$
Then, we get from (\ref{Spremarkineq1}) that
\begin{equation}\label{Spremarkineq2}
  \|\hat{\rho}_{\epsilon}-\rho\|_p\lesssim_{\log(m,n)}\Big(\frac{mr^{1/p}}{\sqrt{nK}}\vee \frac{m^{3/2}r^{1+\frac{1}{p}}}{n\sqrt{K}}\Big)\bigwedge \Big(\frac{m}{\sqrt{nK}}\vee \frac{m^{3/2}r}{n\sqrt{K}}\Big)^{1-\frac{1}{p}}\bigwedge 1
\end{equation}
for all $1\leq p\leq +\infty$. This bound holds for any $n, K$ as long as $K\lesssim n$. It is unclear at this moment whether the least squares estimator in \cite{koltchinskii15optimal} and the projection estimator in \cite{Xia2016Estimation} can obtain this bound in the same settings. Some interesting bounds can be obtained when we consider specific choices of $n$ and $K$. For instance, consider $K\asymp n$ such that $n\asymp\sqrt{nK}\gtrsim mr\log^3m\log^3n$. Then, the following bound holds with probability at least $1-\frac{1}{2m}$,
$$
 \|\hat{\rho}_{\epsilon}-\rho\|_p\lesssim_{\log(m,n)} \Big(\frac{m}{n}r^{1/p}\bigvee \frac{m^{3/2}}{n^{3/2}}r^{1+\frac{1}{p}}\Big)\bigwedge \Big(\frac{m^{3/2}}{n^{3/2}}r\Big)^{1-\frac{1}{p}},\quad 1\leq p\leq +\infty.
$$
More specifically, consider $p=+\infty$, we obtain $\|\hat{\rho}_{\eps}-\rho\|_{\infty}\lesssim_{\log(m,n)}\frac{m^{3/2}}{n^{3/2}}r\lesssim \frac{m\sqrt{r}}{n}\sqrt{\frac{mr}{n}}$. It is interesting to compare this bound with the bounds established for the projection estimator \cite{Xia2016Estimation} and the least squares estimator \cite{koltchinskii15optimal}. Since $K\gtrsim m$, it is proved that the spectral norm convergence rate of the projection estimator (similar bounds as in Theorem~\ref{Spthm}) is of the order $\sqrt{\frac{m}{n}}$ (with logarithmic factors). Clearly, $\frac{m^{3/2}}{n^{3/2}}r\leq \sqrt{\frac{m}{n}}$.
In addition, if we consider the least squares estimator and control its spectral norm convergence rate by its Frobenius norm convergence rate (similar bounds as in Corollary~\ref{cor}), we obtain a simple bound (recall that $K\asymp n$) as $\frac{m\sqrt{r}}{n}$ (up to logarithmic factors) which clearly dominates $\frac{m\sqrt{r}}{n}\sqrt{\frac{mr}{n}}$, especially when $n\gtrsim mr^\alpha$ for $\alpha>1$. 
Basically, we conclude that the Dantzig estimator $\hat{\rho}_{\eps}$ can achieve better convergence rates than the least squares estimator and the projection estimator in the case $n\gtrsim_{\log m} mr$. More discussions are provided in Section~\ref{discusssec}.
\end{remark}

\section{Discussion}\label{discusssec}
The main purpose of this paper is to study the convergence rates of the Dantzig estimator in Schatten $p$-norms for all $1\leq p\leq +\infty$ and compare it with the least squares estimator in \cite{koltchinskii15optimal} and the projection estimator in \cite{Xia2016Estimation}. In this section, we provide a summary of convergence rates in Schatten norm distances obtained for estimators which have been proved in the literature.
The summary is based on the different cases of $n$ and $K$. In the following, consider $n\asymp_{\log(m,n)} mr^{\alpha}$ (assume that $m$ and $r$ dominate the logarithmic factors).
\begin{enumerate}
\item If $0\leq \alpha< 1$. No estimators have been shown to achieve nontrivial convergence rate in Schatten $1$-norm distance. For Schatten $p$-norms with $1<p$, the convergence rates are proved in Corollary~\ref{cor} and Theorem~\ref{Spthm}. These rates can be obtained by the least squares estimator \cite{koltchinskii15optimal} and the projection estimator \cite{Xia2016Estimation} separately, and simultaneously by the Dantzig-type estimator $\hat{\rho}_{\eps}$. Basically, the following bounds  hold with high probability,
$$
\|\hat{\rho}_{\eps}-\rho\|_p\lesssim_{\log(m,n)} \Big(\frac{m}{\sqrt{nK}}\Big)^{1-\frac{1}{p}}\bigwedge 1, \quad 1\leq p\leq 2, K\lesssim n.
$$
and
$$
\|\hat{\rho}_{\eps}-\rho\|_p\lesssim_{\log(m,n)} \Big(\frac{m}{\sqrt{nK}}\Big)^{1-\frac{1}{p}}\bigwedge 1, \quad 1\leq p\leq +\infty, K\lesssim m.
$$
\item If $1\leq \alpha<2$. The RIP technique can be used or the Dantzig estimator can be considered (as in Remark~\ref{thmremark}) and the following bound holds with high probability,
$$
\|\hat{\rho}_{\eps}-\rho\|_p\lesssim_{\log(m,n)} \Big(\frac{mr^{1/p}}{\sqrt{nK}}\bigwedge \Big(\frac{m}{\sqrt{nK}}\Big)^{1-\frac{1}{p}}\bigwedge 1\Big), \quad 1\leq p\leq 2, 1\leq K\leq +\infty.
$$
Assume that $K$ is large enough such that $\sqrt{nK}\gtrsim mr$, then with high probability
\begin{equation}
\label{Spthmineq3}
 \|\hat{\rho}_{\epsilon}-\rho\|_p\lesssim_{\log(m,n)} \Big(\frac{mr^{1/p}}{\sqrt{nK}}\vee \frac{m^{3/2}r^{1+\frac{1}{p}}}{n\sqrt{K}}\Big)\bigwedge \Big(\frac{m^{3/2}r}{n\sqrt{K}}\Big)^{1-\frac{1}{p}}\bigwedge 1,\quad 1\leq p\leq +\infty.
  \end{equation}
Moreover, if $K\asymp n\asymp_{\log(m,n)} mr^{\alpha}$ for $1\leq \alpha<2$, then 
$$
\|\hat{\rho}_{\eps}-\rho\|_{\infty}\lesssim_{\log(m,n)} \frac{m}{\sqrt{nK}}r^{1-\frac{\alpha}{2}}
$$
where the right hand side can be equivalently written as $\sqrt{\frac{m}{n}}\frac{1}{r^{\alpha-1}}$. These bounds are nontrivial even when we compare them with the bounds obtained for the least squares estimator \cite{koltchinskii15optimal} and the projection estimator \cite{Xia2016Estimation}. Note that there is no upper bound constraint on the value of $K$.
  \item If $\alpha\geq 2$. In this case, the convergence rates in Schatten $p$-norms are simple for all $1\leq p\leq +\infty$. Together with the minimax lower bounds in Theorem~\ref{minimaxthm}, 
  $$
  \underset{\hat{\rho}}{\inf}\ \underset{\rho\in\mathcal{S}_{m,r}}{\sup} \mathbb{E}_{\rho}\|\hat{\rho}-\rho\|_p\asymp \Big(\frac{mr^{1/p}}{\sqrt{nK}}\bigwedge 1\Big)
  $$
  for all $1\leq p\leq +\infty$ and $1\leq K\leq +\infty$.
  It is interesting to notice that  the condition $n\gtrsim_{\log(m,n)} mr^2$ is also needed in proving the optimal
  convergence rates in Schatten $p$-norms of Dantzig estimator for estimating general low rank matrices with Gaussian or Rademacher measurements,
  see Xia~\cite{xia2014optimal}. It is still an open problem that whether this condition is necessary.
\end{enumerate}

\section*{Acknowledgment}
I would like to thank an anonymous reviewer for several helpful suggestions on improving the quality of the paper.
\section{Proofs}
\subsection{Proof of the minimax lower bounds}
\begin{proof}[Proof of Theorem~\ref{minimaxthm}]
The proof is based on several steps. Basically, some techniques in \cite{koltchinskii15optimal} are combined and we construct a subset of $\mathcal{S}_p'\subset \mathcal{S}_{m,r}$ whose elements are well separated in Schatten $p$-norm such that for each $\rho\in \mathcal{S}_p$, $\big|\langle\rho, E_k \rangle\big|\leq \frac{0.7}{\sqrt{m}}$ with $k=2,3,\ldots,m^2$. Then, minimax lower bounds are established by calculating the Kullback-Leibler divergence between Binomial distributions.

Denote by $\mathcal{G}_{k,l}$ the Grassmann manifold which is the set of all $k$-dimensional subspaces $L$ of the $l$-dimensional space $\mathbb{R}^l$. Given such a subspace $L\subset\mathbb{R}^l$ with $\dim(L)=k$, let $P_L$ be the orthogonal projector onto $L$ and let $\mathfrak{P}_{k,l}:=\{P_L: L\in\mathcal{G}_{k,l}\}$. The set of all $k$-dimensional projectors $\mathfrak{P}_{k,l}$ will be equipped with Schatten $p$-norm distances for all $p\in[1,+\infty]$ (which can be also viewed as distances on the Grassmannian itself): $d_p(Q_1,Q_2):=\|Q_1-Q_2\|_p, Q_1,Q_2\in\mathfrak{P}_{k,l}$. Recall that the $\eps$-packing number of a metric space $(T,d)$ is defined as
$$
D(T,d,\eps):=\max\Big\{n:\  \textrm{there are } t_1,\ldots,t_n\in T,\ \textrm{such that} \ \min_{i\neq j} d(t_i,t_j)>\eps\Big\}.
$$
The packing number of $\mathfrak{P}_{k,l}$ with respect to Schatten distances $d_p$ will be needed and it is given in the following lemma (see Pajor~\cite{pajor1998metric}).
\begin{lemma}\label{pajorlem}
For all integer $1\leq k\leq l$ such that $k\leq l-k$, and all $1\leq p\leq \infty$, the following bounds hold
\begin{equation}\label{pajorlemeq1}
\Big(\frac{c}{\eps}\Big)^d\leq D\big(\mathfrak{P}_{k,l},d_p,\eps k^{1/p}\big)\leq \Big(\frac{C}{\eps}\Big)^d,\quad \eps>0
\end{equation}
with $d=k(l-k)$ and universal positive constants $c,C$.
\end{lemma}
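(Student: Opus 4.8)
The plan is to establish the two inequalities in \eqref{pajorlemeq1} separately, in each direction reducing the problem on the Grassmannian $\mathfrak{P}_{k,l}$ to an elementary estimate via the identity $\|Q_1-Q_2\|_p^p=2\sum_{i=1}^{k}\sin^p\theta_i$ (with $\|Q_1-Q_2\|_\infty=\max_{i\le k}\sin\theta_i$ when $p=\infty$), where $\theta_1,\dots,\theta_k\in[0,\pi/2]$ are the principal angles between the two $k$-dimensional subspaces; this holds because the nonzero eigenvalues of $Q_1-Q_2$ are exactly $\pm\sin\theta_1,\dots,\pm\sin\theta_k$. Consequently $d_p$ is, up to the bounded factor $2^{1/p}$, the $\ell_p$-norm of $(\sin\theta_i)_{i\le k}$, so $\mathrm{diam}_{d_p}(\mathfrak{P}_{k,l})\asymp k^{1/p}$ and the scale $\eps k^{1/p}$ plays the role of a fixed proportion $\eps$ of the diameter; the bounds are informative only for $\eps$ below an absolute constant (otherwise the packing number is $1$), and I only discuss that range.

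For the upper bound I would pass to the operator-norm (gap) metric $d_\infty$. Since $Q_1-Q_2$ has rank at most $2k$, one has $\|Q_1-Q_2\|_p\le(2k)^{1/p}\|Q_1-Q_2\|_\infty$, so any $(c\eps)$-net of $\mathfrak{P}_{k,l}$ in $d_\infty$ is a $(\tfrac12\eps k^{1/p})$-net in $d_p$ for a suitable absolute $c$; combined with $D(T,d,\cdot)\le N(T,d,\tfrac12\,\cdot)$ this gives $D(\mathfrak{P}_{k,l},d_p,\eps k^{1/p})\le N(\mathfrak{P}_{k,l},d_\infty,c\eps)$. The latter is at most $(C/\eps)^{k(l-k)}$ by the standard volumetric net estimate for Grassmann manifolds in the gap metric (obtained by transferring a small ball around each subspace to the tangent space and counting by volume), which yields the right inequality in \eqref{pajorlemeq1}.

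For the lower bound I would build a large packing inside one coordinate chart. Fix a reference $k$-subspace and write nearby subspaces as graphs $L_T=\mathrm{range}\begin{pmatrix}I_k\\ T\end{pmatrix}$ with $T$ an $(l-k)\times k$ matrix, $\|T\|_\infty\le1$. A short perturbation computation shows that the sorted principal angles between $L_T$ and $L_{T'}$ are comparable, with absolute constants, to the sorted singular values of $T-T'$, hence $\|Q_T-Q_{T'}\|_p\asymp\|T-T'\|_p$ uniformly on this chart. It then suffices to produce, inside the operator-norm ball $\mathcal{B}_\infty=\{\|T\|_\infty\le1\}\subset\mathbb{R}^{(l-k)\times k}$, many points pairwise $\delta$-separated in Schatten $p$-norm with $\delta:=\eps k^{1/p}$: a maximal such family is automatically a $\delta$-net of $\mathcal{B}_\infty$, so its cardinality $N$ satisfies $\mathrm{vol}(\mathcal{B}_\infty)\le N\delta^{d}\mathrm{vol}(\mathcal{B}_p)$ with $d=k(l-k)$ and $\mathcal{B}_p$ the Schatten $p$-ball. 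The classical two-sided volume estimates for Schatten balls of rectangular matrices give $\big(\mathrm{vol}(\mathcal{B}_\infty)/\mathrm{vol}(\mathcal{B}_p)\big)^{1/d}\asymp k^{1/p}$, whence $N\ge(c_1 k^{1/p}/\delta)^{d}=(c_1/\eps)^{d}$; transporting this packing through the chart costs only an absolute constant in the separation radius, which after relabeling $\eps$ gives the left inequality in \eqref{pajorlemeq1}.

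The two places where the actual work concentrates are (i) the chart comparison $\|Q_T-Q_{T'}\|_p\asymp\|T-T'\|_p$ valid uniformly over $\|T\|_\infty,\|T'\|_\infty\le1$, which requires both the first-order identification of the principal angles with the singular values of $T-T'$ and a quantitative control of the remainder, and (ii) the two-sided estimate of $\mathrm{vol}(\mathcal{B}_p)$ for rectangular matrices; both are classical, and the argument above is in substance that of Pajor~\cite{pajor1998metric}.
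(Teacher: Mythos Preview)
The paper does not prove this lemma; it simply cites Pajor~\cite{pajor1998metric} and uses the statement as a black box. Your sketch is essentially a correct outline of Pajor's original argument (as you yourself note at the end), so there is nothing to compare against here beyond the reference.
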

We will prove the bound (\ref{minimaxthmineq1}) for $2\leq r\leq m/2$ since the proof in the case $r=1$ is simpler. Moreover, the case $r>m/2$ can be easily reduced to the case $r\leq m/2$ by adjusting the constant $c$ in (\ref{minimaxthmineq1}). According to Lemma~\ref{pajorlem}, there is a subset $\mathcal{D}_p\subset \mathfrak{P}_{r-1,m-1}$ such that $\textrm{Card}\big(\mathcal{D}_p\big)\geq 2^{(r-1)(m-r)}$ and, for some positive constant $c'$, $\|Q_1-Q_2\|_p\geq c'(r-1)^{1/p}$ for any $Q_1,Q_2\in \mathcal{D}_p$ and $Q_1\neq Q_2$. Note that for any $Q\in\mathfrak{P}_{r-1,m-1}$, it can be viewed as an $(m-1)\times (m-1)$ positive definite matrix with $\tr(Q)=r-1$. Then, construct the following $m\times m$ density matrix
\begin{equation}\label{SQmat}
S_{Q}=\left(
\begin{array}{cc}
1-\kappa& \bf 0'\\
{\bf 0}&\kappa\frac{Q}{r-1}
\end{array}
\right).
\end{equation}
It is easy to check that whenever $\kappa\leq 1$, $S_Q$ is indeed a density matrix with rank at most $r$. Now we take $\kappa:=c_1\frac{m(r-1)}{\sqrt{nK}}$ with a small enough absolute constant $c_1>0$ and assume that $\kappa\leq \frac{1}{2}$. 

Define a subset of density matrices $\mathcal{S}_p:=\big\{S_Q: Q\in\mathcal{D}_p\big\}$ and an immediate result is $\text{Card}(\mathcal{S}_p)=\text{Card}(\mathcal{D}_p)\geq 2^{(r-1)(m-r)}$ and $\mathcal{S}_p\subset \mathcal{S}_{m,r}$. Moreover, for $S_{Q_1}, S_{Q_2}$ with $Q_1,Q_2\in\mathcal{D}_p$ and $Q_1\neq Q_2$, we have
\begin{align*}
\|S_{Q_1}-&S_{Q_2}\|_p=\frac{\kappa}{r-1}\|Q_1-Q_2\|_p\geq c'\kappa(r-1)^{1/p-1}\\
&\geq c'c_1\frac{m(r-1)^{1/p}}{\sqrt{nK}}\geq c\frac{mr^{1/p}}{\sqrt{nK}}
\end{align*}
for some constant $c>0$. To this end, we obtain a large enough subset $\mathcal{S}_p$ such that each element is well separated in Schatten $p$-norm which holds for any $1\leq p\leq +\infty$. Recall that $\mathcal{E}:=\{E_1,E_2,\ldots,E_{m^2}\}$ is the set of Pauli matrices with $E_1=\frac{I_m}{m}$ with $I_m$ being the $m\times m$ identity matrix. Now, we construct a subset $\mathcal{S}_p'\subset\mathcal{S}_{m,r}$ such that $\text{Card}(\mathcal{S}_p')=\text{Card}(\mathcal{S}_p)$ and for each $S\in\mathcal{S}_p'$, $\big|\langle S,E_k \rangle\big|\leq \frac{0.7}{\sqrt{m}}$ for all $2\leq k\leq m^2$. The following lemma will be needed and its proof is provided in \cite[Lemma~9]{koltchinskii15optimal} by choosing $\gamma=0.2$ there and observing $\tr(E_k)=0$ for $2\leq k\leq m^2$.
\begin{lemma}\label{Evlem}
There exists a universal constant $C_1>0$ such that when $m\geq C_1$, there exists a vector $v\in\mathbb{C}^m$ with $\|v\|=1$ and
$$
\max_{2\leq k\leq m^2} \big|\langle E_kv,v\rangle\big|\leq \frac{0.1}{\sqrt{m}}.
$$
\end{lemma}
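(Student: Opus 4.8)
The plan is to produce the vector $v$ by the probabilistic method: take $v$ uniformly distributed on the unit sphere of $\mathbb{C}^m$ and show that the required inequality holds for all $k$ simultaneously with positive probability. First I would rewrite the target quantity using the spectral structure of the Pauli matrices recalled in Section~\ref{paulisec}: for $2\le k\le m^2$ one has $E_k=\frac{1}{\sqrt m}P_k^+-\frac{1}{\sqrt m}P_k^-$ with $P_k^\pm$ orthogonal projectors of rank $m/2$ and $P_k^++P_k^-=I_m$. Hence, for any unit vector $v$,
\begin{equation*}
\langle E_kv,v\rangle=\frac{1}{\sqrt m}\bigl(\|P_k^+v\|^2-\|P_k^-v\|^2\bigr)=\frac{1}{\sqrt m}\bigl(2\|P_k^+v\|^2-1\bigr),
\end{equation*}
so the statement reduces to finding a unit vector $v$ with $\bigl|\,\|P_k^+v\|^2-\tfrac12\,\bigr|\le 0.05$ for every $k\in\{2,\dots,m^2\}$.

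For a fixed $k$, if $v$ is Haar-distributed on the complex unit sphere then $\|P_k^+v\|^2$ follows a $\mathrm{Beta}(m/2,m/2)$ law, with mean $1/2$ and variance $\asymp 1/m$. The key ingredient is the exponential concentration of this law around its mean: there is a universal $c>0$ such that
\begin{equation*}
\mathbb{P}\Bigl(\bigl|\,\|P_k^+v\|^2-\tfrac12\,\bigr|>t\Bigr)\le 2e^{-cmt^2},\qquad t>0,
\end{equation*}
which can be obtained either directly from the $\mathrm{Beta}(m/2,m/2)$ density or from L\'evy's concentration inequality on $S^{2m-1}$ applied to the $1$-Lipschitz map $v\mapsto\|P_k^+v\|$, whose median is within $O(m^{-1/2})$ of $1/\sqrt 2$.

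Finally I would take a union bound over the at most $m^2$ indices $k$: the probability that $\bigl|\,\|P_k^+v\|^2-\tfrac12\,\bigr|>0.05$ for some $k$ is at most $2m^2e^{-cm/400}$, which is strictly less than $1$ (in fact tends to $0$) as soon as $m\ge C_1$ for a suitable universal constant $C_1$. Consequently a vector $v$ with the asserted property exists, which proves the lemma. The only genuinely delicate point is tracking the constant in the concentration estimate so that $m^2e^{-cm}<1$ for all large $m$; everything else is routine, and the structural input — that each non-identity Pauli matrix has eigenvalues $\pm\frac{1}{\sqrt m}$ with equal multiplicities — has already been established earlier in the paper.
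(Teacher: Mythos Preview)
Your argument is correct. The reduction to $\bigl|\,\|P_k^+v\|^2-\tfrac12\,\bigr|\le 0.05$ via the spectral structure $E_k=\frac{1}{\sqrt m}(P_k^+-P_k^-)$ is exactly right, the identification of the law of $\|P_k^+v\|^2$ as $\mathrm{Beta}(m/2,m/2)$ for $v$ Haar on the complex unit sphere is correct (the complex $m/2$-dimensional range of $P_k^+$ is a real $m$-dimensional subspace of $\mathbb{R}^{2m}$), and the sub-Gaussian concentration followed by a union bound over the $m^2-1$ indices yields the existence of $v$ for $m$ larger than an absolute constant.

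As for comparison with the paper: the paper does not give its own proof of this lemma but simply invokes \cite[Lemma~9]{koltchinskii15optimal}, specializing the parameter there to $\gamma=0.2$ and using $\tr(E_k)=0$ for $k\ge 2$. The argument in that reference is precisely the probabilistic-method route you outline --- draw $v$ uniformly on the sphere, use concentration of $v\mapsto\langle E_kv,v\rangle$ (there phrased via Gaussian/L\'evy concentration rather than the Beta law directly), and take a union bound over the $m^2$ basis elements. So your proof is essentially the same as the one the paper cites; your Beta-distribution formulation is a slightly more explicit variant, but it neither gains nor loses anything substantive relative to the L\'evy-concentration phrasing.
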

We set $e_1:=v$ in Lemma~\ref{Evlem} and construct an orthonormal basis $e_1,e_2,\ldots,e_m$. Let $\vec{e}:=[e_1,e_2,\ldots,e_m]\in \mathbb{C}^{m\times m}$ with $e_i$ being the $i$-th column of $\vec{e}$ for $1\leq i\leq m$.
To this end, we define the subset of density matrices as follows
\begin{equation}\label{Sp'}
\mathcal{S}_p':=\big\{S_Q'=\vec{e}S_Q\vec{e}': S_Q\in\mathcal{S}_p\big\}.
\end{equation}
In other words, $S_Q'$ is obtained by assuming $S_Q$ defined in (\ref{SQmat}) represent linear transformation in basis $\{e_1,\ldots,e_m\}$. Since $\vec{e}$ is an orthonormal matrix,
$$
\|S_{Q_1}'-S_{Q_2}'\|_p=\|S_{Q_1}-S_{Q_2}\|_p\geq c\frac{mr^{1/p}}{\sqrt{nK}},\quad \forall\ Q_1\neq Q_2\in \mathcal{D}_p.
$$
Moreover, for each $E_k, 2\leq k\leq m^2$,
\begin{align*}
\big|\langle S_Q',&E_k \rangle\big|=\Big|(1-\kappa)\langle E_kv,v\rangle+\frac{\kappa}{r-1}\langle Q,E_k\rangle\Big|\\
\leq&(1-\kappa)\big|\langle E_kv,v\rangle\big|+\frac{\kappa}{r-1}\|E_k\|_{\infty}\|Q\|_1\leq (1-\kappa)\frac{0.1}{\sqrt{m}}+\frac{\kappa}{\sqrt{m}}<\frac{0.7}{\sqrt{m}},
\end{align*}
where we used the fact $\kappa\leq \frac{1}{2}$.
Recall that $\mathbb{P}_{\rho}$ denotes the probability distribution of $(X_1, K_1^+),\ldots,(X_n,K_n^+)$ with $X_i$ being uniformly sampled from $\mathcal{E}$ for each $1\leq i\leq n$. We are ready to prove the upper bound of the Kullback-Leibler divergence $D_{KL}(\mathbb{P}_{S_{Q_1}'}\|\mathbb{P}_{S_{Q_2}'})$ for $S_{Q_1}'\neq S_{Q_2}'\in\mathcal{S}_{p}'$. Let $\Pi$ denote the distribution of $X$ which is a uniform distribution over $\mathcal{E}$. Then,
\begin{align}
D_{KL}(\mathbb{P}_{S_{Q_1}'}\|&\mathbb{P}_{S_{Q_2}'})=n\mathbb{E}_{\Pi} D_{KL}\bigg(\textrm{Bin}\Big(K,\frac{1+\sqrt{m}\langle S_{Q_1'}, X\rangle}{2}\Big)\Big\|\textrm{Bin}\Big(K,\frac{1+\sqrt{m}\langle S_{Q_2'}, X\rangle}{2}\Big)\bigg)\nonumber\\
=&\frac{n}{m^2}D_{KL}\bigg(\textrm{Bin}\Big(K,\frac{1+\sqrt{m}\langle S_{Q_1'}, E_1\rangle}{2}\Big)\Big\|\textrm{Bin}\Big(K,\frac{1+\sqrt{m}\langle S_{Q_2'}, E_1\rangle}{2}\Big)\bigg)\label{KLdivineq1}\\
+&\frac{n}{m^2}\sum_{2\leq k\leq m^2}D_{KL}\bigg(\textrm{Bin}\Big(K,\frac{1+\sqrt{m}\langle S_{Q_1'}, E_k\rangle}{2}\Big)\Big\|\textrm{Bin}\Big(K,\frac{1+\sqrt{m}\langle S_{Q_2'}, E_k\rangle}{2}\Big)\bigg)\label{KLdivineq2}
\end{align}
Recall that $\sqrt{m}\langle S, E_1\rangle=1$ for any $S\in\mathcal{S}_m$. As a result, the term in  (\ref{KLdivineq1}) just equals $0$. To deal with (\ref{KLdivineq2}), we need a simple fact of Kullback-Leibler divergence between two Binomial distributions.
\begin{align*}
D_{KL}\Big(\textrm{Bin}(K,p)\|&\textrm{Bin}(K,q)\Big)=K\Big(p\log\frac{p}{q}+(1-p)\log\frac{1-p}{1-q}\Big)\leq 8K(p-q)^2
\end{align*}
where the last inequality holds whenever $p,q\in[3/20,17/20]$. As a result, we obtain
\begin{align*}
D_{KL}(\mathbb{P}_{S_{Q_1}'}\|&\mathbb{P}_{S_{Q_2}'})=\frac{nK}{m^2}\sum_{2\leq k\leq m^2}m\langle S_{Q_1}'-S_{Q_2}', E_k\rangle^2\\
\leq& \frac{nK}{m}\|S_{Q_1}'-S_{Q_2}'\|_2^2\leq \frac{2nK\kappa^2}{m(r-1)}\\
&=2c_1m(r-1)\leq \frac{1}{10}\log\textrm{Card}(\mathcal{S}_p')=\frac{(r-1)(m-r)\log 2}{10},
\end{align*}
where the last inequality holds as long as $c>0$ is small enough. Then, by \cite[Theorem~2.5]{intro}, there exist universal constants $c,c'>0$ such that
\begin{equation}\label{minimaxineq3}
\underset{\hat{\rho}}{\inf}\ \underset{\rho\in\mathcal{S}_p'}{\sup}\ \mathbb{P}_{\rho}\Big(\|\hat{\rho}-\rho\|_p\geq c\frac{mr^{1/p}}{\sqrt{nK}}\Big)\geq c'
\end{equation}
where the bound holds for all $1\leq p\leq +\infty$. Remember that $\mathcal{S}_p'\subset\mathcal{S}_{m,r}$, we get the first term on the right hand side of bound (\ref{minimaxthmineq1}). To this end, we assumed $\kappa\leq \frac{1}{2}$.

Now, consider $\kappa>\frac{1}{2}$. In this case, $c_1\frac{m(r-1)}{\sqrt{nK}}>\frac{1}{2}$. Choose the largest integer $2\leq r'<r-1$ such that $c_1\frac{m(r'-1)}{\sqrt{nK}}\leq \frac{1}{2}$. Following the method above, we get
\begin{equation}\label{minimaxineq4}
\underset{\hat{\rho}}{\inf}\ \underset{\rho\in\mathcal{S}_{m,r'}}{\sup}\ \mathbb{P}_{\rho}\Big(\|\hat{\rho}-\rho\|_p\geq c\frac{m(r'-1)^{1/p}}{\sqrt{nK}}\Big)\geq c'.
\end{equation}
The definition of $r'$ implies that $r'\asymp r'-1\asymp \Big(\frac{m}{\sqrt{nK}}\Big)^{-1}$. Therefore,
$$
\frac{m(r'-1)^{1/p}}{\sqrt{nK}}\asymp \Big(\frac{m}{\sqrt{nK}}\Big)^{1-1/p}.
$$
Since $\mathcal{S}_{m,r'}\subset\mathcal{S}_{m,r}$, by combing (\ref{minimaxineq3}) and (\ref{minimaxineq4}), we get the bound (\ref{minimaxthmineq1}). Following the similar approach used in \cite[Theorem~4]{koltchinskii15optimal} (by comparing $K(S_1\|S_2)$ with squared Hellinger distance), we can get the bound (\ref{minimaxthmineq2}).
\end{proof}

\subsection{Proof of Theorem~\ref{oraclethm}}\label{proofthmsec}
The main technical tool for our proof is the following lemma which gives a probabilistic upper bound of the product empirical processes.
For any $\Delta\in[0,1]$, define the set and quantity
\begin{equation*}
 \mathcal{A}(\Delta):=\bigl\{A\in\mathbb{H}_m, \|A\|_1\leq 1, \|A\|_{L_2(\Pi)}\leq \Delta\bigr\}
\end{equation*}
and
\begin{equation*}
\alpha_n(\Delta_1,\Delta_2):=\underset{A_1\in\mathcal{A}(\Delta_1)}{\sup}\underset{A_2\in\mathcal{A}(\Delta_2)}{\sup} \Big|\frac{1}{n}\sum_{i=1}^n\big<A_1,X_i\big>\big<A_2,X_i\big>-\mathbb{E}\big<A_1,X\big>\big<A_2,X\big>\Big|.
\end{equation*}
\begin{lemma}
\label{f2lemma}
Let $X_1,\ldots,X_n$ be $i.i.d.$ random matrices uniformly sampled from the Pauli basis $\mathcal{E}$.
 Given $0<\delta^-<\delta^+$ and $t\geq 1$, let
\begin{equation*}
 \bar{t}:=t+\log(\log_2(\delta^+/\delta^-)+3).
\end{equation*}
Then, with some constant $C$ and probability at least $1-e^{-t}$, the following bound holds for all $\frac{\Delta_1+\Delta_2}{2}\in[\delta^-,\delta^+]$:
\begin{equation*}
 \alpha_n(\Delta_1,\Delta_2)\leq C\Bigl[(\Delta_1+\Delta_2) \frac{\log^{3/2}m\log^{3/2}{n}+\sqrt{\bar{t}}}{\sqrt{nm}}+\frac{\log^3m\log^{3}n+\bar{t}}{nm}\Bigr].
\end{equation*}
\end{lemma}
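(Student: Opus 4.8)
The plan is to reduce the product empirical process to a single \emph{quadratic} empirical process, to control the latter by symmetrization followed by a matrix concentration bound, and to obtain the uniform-in-$(\Delta_1,\Delta_2)$ statement by a dyadic peeling argument (which is exactly what produces the $\log_2(\delta^+/\delta^-)$ term inside $\bar t$). First I would use polarization: for $A_1\in\mathcal{A}(\Delta_1)$ and $A_2\in\mathcal{A}(\Delta_2)$, setting $B_{\pm}:=(A_1\pm A_2)/2$ gives $\|B_{\pm}\|_1\le1$, $\|B_{\pm}\|_{L_2(\Pi)}\le(\Delta_1+\Delta_2)/2$ and $\langle A_1,X\rangle\langle A_2,X\rangle=\langle B_+,X\rangle^2-\langle B_-,X\rangle^2$. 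Hence $\alpha_n(\Delta_1,\Delta_2)\le2\beta_n(\Delta)$ with $\Delta:=\Delta_1+\Delta_2$ and
\[
\beta_n(\Delta):=\sup_{A\in\mathcal{A}(\Delta)}\Big|\frac1n\sum_{i=1}^n\langle A,X_i\rangle^2-\mathbb{E}\langle A,X\rangle^2\Big|,
\]
so it suffices to bound $\beta_n(\Delta)$ at a single scale and then make this bound uniform over $\Delta\in[2\delta^-,2\delta^+]$.

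Next, since the summands $\langle A,X_i\rangle^2$ take values in $[0,1/m]$ (because $|\langle A,X_i\rangle|\le\|A\|_1\|X_i\|_\infty\le1/\sqrt m$ for Pauli $X_i$) and have variance at most $m^{-1}\mathbb{E}\langle A,X\rangle^2\le\Delta^2/m$, Talagrand's (Bousquet's) concentration inequality for the supremum of a bounded empirical process gives, with probability at least $1-e^{-t}$,
\[
\beta_n(\Delta)\le 2\,\mathbb{E}\beta_n(\Delta)+C\Big(\Delta\sqrt{\frac{t}{nm}}+\frac{t}{nm}\Big),
\]
reducing the problem to bounding $\mathbb{E}\beta_n(\Delta)$. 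After symmetrization and using $\langle A,X_i\rangle^2=\langle A,\langle A,X_i\rangle X_i\rangle$ together with $\|A\|_1\le1$, one gets $\mathbb{E}\beta_n(\Delta)\le2\,\mathbb{E}\sup_{A\in\mathcal{A}(\Delta)}\|n^{-1}\sum_i\varepsilon_i\langle A,X_i\rangle X_i\|_\infty$. For a \emph{fixed} $A$, the matrix sum $n^{-1}\sum_i\varepsilon_i\langle A,X_i\rangle X_i$ has i.i.d.\ mean-zero terms with $\|\cdot\|_\infty\le1/m$ and variance proxy $\|\mathbb{E}(\langle A,X\rangle^2X^2)\|_\infty\le\Delta^2/m$ (since $X^2\preccurlyeq m^{-1}I$), so the matrix Bernstein inequality (Lemma~\ref{matBernlem}) yields a bound of the order $\Delta\sqrt{\tfrac{\log m}{nm}}+\tfrac{\log m}{nm}$ up to the $\psi_{\alpha}$-logarithmic factor.

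The main obstacle is to upgrade this \emph{pointwise} matrix estimate to one that is \emph{uniform} over $A\in\mathcal{A}(\Delta)$ while keeping the linear dependence on $\Delta$. This requires a chaining argument over $\mathcal{A}(\Delta)$ that simultaneously exploits (i) the low-rank structure — the extreme points of $\{\|A\|_1\le1\}\subset\mathbb{H}_m$ are rank one — and (ii) the Frobenius-type localization $\|A\|_{L_2(\Pi)}\le\Delta$; it also forces replacing $\|A\|_{L_2(\Pi_n)}$ by $\|A\|_{L_2(\Pi)}+\sqrt{\alpha_n(\Delta,\Delta)}$, producing a self-referential inequality for $\alpha_n$ that one solves to split off the second-order term $\tfrac{\log^3m\log^3n}{nm}$. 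The combination of the $\sqrt{\log m}$ from the matrix concentration with the entropy cost of the chaining/peeling net is what generates the stated $\log^{3/2}m\log^{3/2}n$ factor. Finally, running the above display with $t$ replaced by $\bar t$ on each of the $O(\log_2(\delta^+/\delta^-))$ dyadic shells $\{\tfrac{\Delta_1+\Delta_2}{2}\in[2^j\delta^-,2^{j+1}\delta^-]\}$ and taking a union bound yields the claimed inequality simultaneously for all $\tfrac{\Delta_1+\Delta_2}{2}\in[\delta^-,\delta^+]$.
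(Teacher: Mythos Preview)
Your architecture matches the paper exactly at every step except the central one: polarization to reduce $\alpha_n$ to $\beta_n$, Bousquet/Talagrand to pass to $\mathbb{E}\beta_n(\Delta)$, symmetrization, a self-referential bound via the empirical second moment, and dyadic peeling over $[\delta^-,\delta^+]$ to get $\bar t$. The gap is precisely where you flag ``the main obstacle'': you never actually carry out the uniform-in-$A$ bound on the symmetrized process, and the route you sketch (pointwise matrix Bernstein for $n^{-1}\sum_i\varepsilon_i\langle A,X_i\rangle X_i$, then an unspecified chaining over the nuclear ball exploiting rank-one extreme points) is not how the paper proceeds and it is not clear it would deliver the stated $\log^{3/2}m\,\log^{3/2}n$ without further work.

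What the paper does instead is treat $G_A:=n^{-1/2}\sum_i\varepsilon_i\langle A,X_i\rangle^2$ directly as a sub-Gaussian process with increment metric $d(A_1,A_2)\le 2\sigma_n\|A_1-A_2\|_{L_\infty(\Pi_n)}$, where $\sigma_n^2=\sup_{A\in\mathcal{A}(\Delta)}\|A\|_{L_2(\Pi_n)}^2$, and applies Dudley's entropy integral. The key input is the $L_\infty(\Pi_n)$ covering number of the nuclear ball: the trivial bound $N\le(1+2/(u\sqrt m))^n$ for small $u$, and the nontrivial bound $N\le\exp\{C\log^3m\,\log n/(u^2m)\}$ from Gu\'edon et al./Aubrun/Liu for larger $u$. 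Splitting the entropy integral at $u\asymp 1/\sqrt{nm}$ yields $\mathbb{E}_\varepsilon\sup_A G_A\lesssim \sigma_n m^{-1/2}\log^{3/2}m\,\log^{3/2}n$; then $\mathbb{E}\sigma_n\le\sqrt{\mathbb{E}\beta_n(\Delta)+\Delta^2}$ gives the self-referential inequality you anticipated and produces the two terms in the lemma. So the missing ingredient in your proposal is not a generic chaining but specifically this $L_\infty(\Pi_n)$ entropy estimate for the Schatten-$1$ ball, which is the concrete source of the $\log^{3/2}m\,\log^{3/2}n$ factor.
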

 Generally, tight upper bounds (generic chaining bounds) of product empirical processes are not easy to derive due to the nontrivial geometric structure of the indexing classes of the empirical process, see Mendelson~\cite{mendelson2014upper} and references therein. Even though we suspect that the bound in Lemma~\ref{f2lemma} might not be sharp,  it is sufficient for us to prove the results we need in this paper.
 Lemma~\ref{f2lemma} will be used to prove the oracle inequality (\ref{thmineq1}) and the spectral norm (i.e., $p=+\infty$) convergence rate of $\hat{\rho}_{\epsilon}$ in (\ref{Spthmineq1}).
The proof of Lemma~\ref{f2lemma} is given in Section~\ref{f2lemmasec}.

\begin{proof}[Proof of Theorem~\ref{oraclethm}]
Denote $\Xi_1=\frac{1}{n}\sum_{i=1}^n\xi_iX_i$. By Lemma~\ref{matBernlem}, we know that with probability at least $1-e^{-t}$,
\begin{equation}\label{Xibound}
 \big\|\Xi_1\big\|_{\infty}\leq C\Big(\frac{1}{m}\sqrt{\frac{t+\log(2m)}{nK}}+\frac{\big(t+\log(2m)\big)\log^{1/2}(2m)}{nm\sqrt{K}}\Big)
\end{equation}
for some constant $C>0$. We used the simple facts $\|\mathbb{E}\xi^2 X^2\|_{\infty}^{1/2}\leq \frac{1}{m\sqrt{K}}$ (see \cite{koltchinskii15optimal}) and $\big\|\|\xi X\|_{\infty}\big\|_{\psi_2} \leq \frac{\|\xi\|_{\psi_2}}{\sqrt{m}}\lesssim \frac{1}{m\sqrt{K}}$.
The second term in (\ref{Xibound}) is clearly dominated by the first term as long as $n\geq \big(t+\log(2m)\big)\log(2m)$, which is assumed to be true hereandafter. In the case $n\leq \big(t+\log(2m)\big)\log(2m)$, bounds (\ref{thmineq1}), (\ref{thmineq2}) and (\ref{thmineq3}) are trivial. 

The choice of $\epsilon$ in Theorem~\ref{oraclethm} satisfies that $\eps\gtrsim \|\Xi_1\|_{\infty}$ which guarantees the existence of the solution $\hat{\rho}_{\epsilon}$ since $\Lambda(\epsilon)$ is nonempty and $\rho\in\Lambda(\epsilon)$.
The fact $\hat{\rho}_{\epsilon}\in\Lambda(\epsilon)$ indicates that, for any $S\in\mathcal{S}_m$,
\begin{equation*}
 \frac{1}{n}\sum_{j=1}^n\big(\big<\hat{\rho}_{\epsilon},X_j\big>-Y_j\big)\big<\hat{\rho}_{\epsilon}-S,X_j\big>\leq \epsilon\|\hat{\rho}_{\epsilon}-S\|_1.
\end{equation*}
By arranging the terms accordingly,
\begin{equation*}
\begin{split}
 \big<\hat{\rho}_{\epsilon}-\rho,&\hat{\rho}_{\epsilon}-S\big>_{L_2(\Pi)}\leq \epsilon\|\hat{\rho}_{\epsilon}-S\|_1+\big<\Xi_1,\hat{\rho}_{\epsilon}-S\big>\\
+&\Big|\frac{1}{n}\sum_{i=1}^n\big<\hat{\rho}_{\epsilon}-\rho,X_i\big>\big<\hat{\rho}_{\epsilon}-S,X_i\big>-\mathbb{E}\big<\hat{\rho}_{\epsilon}-\rho,X\big>\big<\hat{\rho}_{\epsilon}-S,X\big>\Big|.
\end{split}
\end{equation*}
Observe that
\begin{equation*}
 2\big<\hat{\rho}_{\epsilon}-\rho,\hat{\rho}_{\epsilon}-S\big>_{L_2(\Pi)}=\|\hat{\rho}_{\epsilon}-\rho\|_{L_2(\Pi)}^2-\|S-\rho\|_{L_2(\Pi)}^2+\|\hat{\rho}_{\epsilon}-S\|_{L_2(\Pi)}^2.
\end{equation*}
Therefore, we get
\begin{equation}\label{proofthmineq1}
\begin{split}
 \|\hat{\rho}_{\epsilon}-\rho\|_{L_2(\Pi)}^2+&\|\hat{\rho}_{\epsilon}-S\|_{L_2(\Pi)}^2\leq \|S-\rho\|_{L_2(\Pi)}^2+2(\epsilon+\|\Xi_1\|_{\infty})\|\hat{\rho}_{\epsilon}-S\|_1\\
+&2\Big|\frac{1}{n}\sum_{i=1}^n\big<\hat{\rho}_{\epsilon}-\rho,X_i\big>\big<\hat{\rho}_{\epsilon}-S,X_i\big>-\mathbb{E}\big<\hat{\rho}_{\epsilon}-\rho,X\big>\big<\hat{\rho}_{\epsilon}-S,X\big>\Big|.
\end{split}
\end{equation}
By definition of $\alpha_n(\Delta_1,\Delta_2)$,
\begin{equation*}
\begin{split}
 \Big|\frac{1}{n}\sum_{i=1}^n\big<\hat{\rho}_{\epsilon}-\rho,X_i\big>&\big<\hat{\rho}_{\epsilon}-S,X_i\big>-\mathbb{E}\big<\hat{\rho}_{\epsilon}-\rho,X\big>\big<\hat{\rho}_{\epsilon}-S,X\big>\Big|\\
\leq& \|\hat{\rho}_{\epsilon}-\rho\|_1\|\hat{\rho}_{\epsilon}-S\|_1\alpha_n\Big(\frac{\|\hat{\rho}_{\epsilon}-\rho\|_{L_2(\Pi)}}{\|\hat{\rho}_{\epsilon}-\rho\|_1},\frac{\|\hat{\rho}_{\epsilon}-S\|_{L_2(\Pi)}}{\|\hat{\rho}_{\epsilon}-S\|_1}\Big).
\end{split}
\end{equation*}
We apply Lemma~\ref{f2lemma} with $\delta^-=\frac{1}{mn}$ and $\delta^+=\frac{1}{m}$. Clearly, if $\frac{\|\hat{\rho}_{\epsilon}-\rho\|_{L_2(\Pi)}}{\|\hat{\rho}_{\epsilon}-\rho\|_1}+\frac{\|\hat{\rho}_{\epsilon}-S\|_{L_2(\Pi)}}{\|\hat{\rho}_{\epsilon}-S\|_1}\geq \delta^-$,
Lemma~\ref{f2lemma} yields that, with probability at least $1-e^{-t}$,
\begin{eqnarray}
&\nonumber \Big|\frac{1}{n}\sum_{i=1}^n\big<\hat{\rho}_{\epsilon}-\rho,X_i\big>\big<\hat{\rho}_{\epsilon}-S,X_i\big>-\mathbb{E}\big<\hat{\rho}_{\epsilon}-\rho,X\big>\big<\hat{\rho}_{\epsilon}-S,X\big>\Big|\\
&\nonumber\leq \|\hat{\rho}_{\epsilon}-\rho\|_1\|\hat{\rho}_{\epsilon}-S\|_1\Big(\frac{\|\hat{\rho}_{\epsilon}-\rho\|_{L_2(\Pi)}}{\|\hat{\rho}_{\epsilon}-\rho\|_1}+\frac{\|\hat{\rho}_{\epsilon}-S\|_{L_2(\Pi)}}{\|\hat{\rho}_{\epsilon}-S\|_1}\Big)C\frac{\log^{3/2}m\log^{3/2}n+\sqrt{\bar{t}}}{\sqrt{nm}}\\
&\nonumber+\|\hat{\rho}_{\epsilon}-\rho\|_1\|\hat{\rho}_{\epsilon}-S\|_1C\frac{\log^{3}m\log^{3}n+\bar{t}}{nm}\\
&\nonumber=\|\hat{\rho}_{\epsilon}-S\|_1\|\hat{\rho}_{\epsilon}-\rho\|_{L_2(\Pi)}C\frac{\log^{3/2}m\log^{3/2}n+\sqrt{\bar{t}}}{\sqrt{nm}}\\
&\nonumber+\|\hat{\rho}_{\epsilon}-\rho\|_1\|\hat{\rho}_{\epsilon}-S\|_{L_2(\Pi)}C\frac{\log^{3/2}m\log^{3/2}n+\sqrt{\bar{t}}}{\sqrt{nm}}\\
&\nonumber+\|\hat{\rho}_{\epsilon}-\rho\|_1\|\hat{\rho}_{\epsilon}-S\|_1C\frac{\log^{3}\log^{3}n+\bar{t}}{nm},
\end{eqnarray}
where $\bar{t}=t+\log(\log_2n+3)$. Recall from Lemma~\ref{lowrankcone} that $\|\hat{\rho}_{\epsilon}-S\|_1\leq 2\sqrt{2\text{rank}(S)}\|\hat{\rho}_{\epsilon}-S\|_2$,
\begin{equation*}
\begin{split}
 \|\hat{\rho}_{\epsilon}-S\|_1&\|\hat{\rho}_{\epsilon}-\rho\|_{L_2(\Pi)}C\frac{\log^{3/2}m\log^{3/2}n+\sqrt{\bar{t}}}{\sqrt{nm}}\\
\leq& \frac{1}{4}\|\hat{\rho}_{\epsilon}-\rho\|_{L_2(\Pi)}^2+2C^2\|\hat{\rho}_{\epsilon}-S\|_1^2\frac{\log^3m\log^3n+t}{nm}\\
\leq & \frac{1}{4}\|\hat{\rho}_{\epsilon}-\rho\|_{L_2(\Pi)}^2+\frac{1}{4}\|\hat{\rho}_{\epsilon}-S\|_{L_2(\Pi)}^2+C\frac{\rank(S)\big(t+\log^3m\log^3n\big)^2}{n^2}\|\hat{\rho}_{\epsilon}-S\|_1^2,
\end{split}
\end{equation*}
for some constant $c_1>0$, where we applied the inequality $ab\leq \frac{a^2}{4}+b^2$ multiple times.
Moreover, since $\|\hat{\rho}_{\epsilon}-\rho\|_1\leq \|\hat{\rho}_{\epsilon}-S\|_1+\|S-\rho\|_1$,
\begin{equation*}
 \begin{split}
  \|\hat{\rho}_{\epsilon}-\rho\|_1&\|\hat{\rho}_{\epsilon}-S\|_{L_2(\Pi)}C\frac{\log^{3/2}m\log^{3/2}n+\sqrt{\bar{t}}}{\sqrt{nm}}\\
\leq&\frac{1}{8}\|\hat{\rho}_{\epsilon}-S\|_{L_2(\Pi)}^2+4C^2\|\hat{\rho}_{\epsilon}-\rho\|_1^2\frac{\log^3m\log^3n+t}{nm}\\
\leq&\frac{1}{8}\|\hat{\rho}_{\epsilon}-S\|_{L_2(\Pi)}^2+8C^2\|\hat{\rho}_{\epsilon}-S\|_1^2\frac{\log^3m\log^3n+t}{nm}+8C^2\|S-\rho\|_1^2\frac{\log^3m\log^3n+t}{nm}\\
\leq&\frac{1}{4}\|\hat{\rho}_{\epsilon}-S\|_{L_2(\Pi)}^2+\frac{1}{4}\|S-\rho\|_{L_2(\Pi)}^2+C\frac{\rank(S)\big(t+\log^3m\log^3n\big)^2}{n^2}\big(\|\hat{\rho}_{\epsilon}-S\|_1^2+\|S-\rho\|_1^2\big).
 \end{split}
\end{equation*}
Similarly, we can get
\begin{equation*}
\begin{split}
 \|\hat{\rho}_{\epsilon}-\rho\|_1\|\hat{\rho}_{\epsilon}-S\|_1C&\frac{\log^{3}m\log^{3}n+\bar{t}}{nm}\leq \frac{1}{4}\|\hat{\rho}_{\epsilon}-S\|_{L_2(\Pi)}^2\\
 &+C\frac{\rank(S)\big(t+\log^3m\log^3n\big)^2}{n^2}\big(\|\hat{\rho}_{\epsilon}-S\|_1^2+\|S-\rho\|_1^2\big).
 \end{split}
\end{equation*}
Therefore, we conclude that if $\frac{\|\hat{\rho}_{\epsilon}-\rho\|_{L_2(\Pi)}}{\|\hat{\rho}_{\epsilon}-\rho\|_1}+\frac{\|\hat{\rho}_{\epsilon}-S\|_{L_2(\Pi)}}{\|\hat{\rho}_{\epsilon}-S\|_1}\geq \delta^-$, with probability at least $1-e^{-t}$,
\begin{equation}\label{proofthmineq2}
 \begin{split}
   \Big|\frac{1}{n}\sum_{i=1}^n\big<\hat{\rho}_{\epsilon}-&\rho,X_i\big>\big<\hat{\rho}_{\epsilon}-S,X_i\big>-\mathbb{E}\big<\hat{\rho}_{\epsilon}-\rho,X\big>\big<\hat{\rho}_{\epsilon}-S,X\big>\Big|\\
\leq& \frac{3}{4}\|\hat{\rho}_{\epsilon}-S\|_{L_2(\Pi)}^2+\frac{1}{4}\|\hat{\rho}_{\epsilon}-\rho\|_{L_2(\Pi)}^2+\frac{1}{4}\|S-\rho\|_{L_2(\Pi)}^2\\
&+C\frac{\rank(S)\big(t+\log^3m\log^3n\big)^2}{n^2}\big(\|\hat{\rho}_{\epsilon}-S\|_1^2+\|S-\rho\|_1^2\big).
 \end{split}
\end{equation}
If, on the other hand, $\frac{\|\hat{\rho}_{\epsilon}-\rho\|_{L_2(\Pi)}}{\|\hat{\rho}_{\epsilon}-\rho\|_1}+\frac{\|\hat{\rho}_{\epsilon}-S\|_{L_2(\Pi)}}{\|\hat{\rho}_{\epsilon}-S\|_1}\leq \delta^-=\frac{1}{mn}$, then the proof of (\ref{thmineq1}) only simplifies
since
\begin{equation*}
\begin{split}
\|\hat{\rho}_{\epsilon}-\rho\|_{L_2(\Pi)}^2\leq \frac{1}{n^2m^2}\|\hat{\rho}_{\epsilon}-\rho\|_1^2\leq& \frac{\log^3m\log^3n+t}{n^2}\big(\|\hat{\rho}_{\epsilon}-S\|_1^2+\|S-\rho\|_1^2\big)\\
\leq& \frac{\rank(S)\big(t+\log^3m\log^3n\big)^2}{n^2}\big(\|\hat{\rho}_{\epsilon}-S\|_1^2+\|S-\rho\|_1^2\big).
\end{split}
\end{equation*}
Plugging (\ref{proofthmineq2}) into (\ref{proofthmineq1}), we get that with probability at least $1-e^{-t}$,
\begin{equation}\label{proofthmineq3}
\begin{split}
 \frac{3}{4}\|\hat{\rho}_{\epsilon}-\rho\|_{L_2(\Pi)}^2+\frac{1}{4}&\|\hat{\rho}_{\epsilon}-S\|_{L_2(\Pi)}^2\leq \frac{5}{4}\|S-\rho\|_{L_2(\Pi)}^2+2(\epsilon+\|\Xi_1\|_{\infty})\|\hat{\rho}_{\epsilon}-S\|_1\\
 &+c_1\frac{\rank(S)\big(t+\log^3m\log^3n\big)^2}{n^2}\big(\|\hat{\rho}_{\epsilon}-S\|_1^2+\|S-\rho\|_1^2\big).
 \end{split}
\end{equation}
By the bound (\ref{Xibound}) and the choice of $\epsilon$,
\begin{equation}\label{proofthmineq4}
\begin{split}
 2(\epsilon+\|\Xi_1\|_{\infty})&\|\hat{\rho}_{\epsilon}-S\|_1\leq \frac{1}{4}\|\hat{\rho}_{\epsilon}-S\|_{L_2(\Pi)}^2+4m^2\text{rank}(S) (\epsilon+\|\Xi_1\|_{\infty})^2\\
\leq&\frac{1}{4}\|\hat{\rho}_{\epsilon}-S\|_{L_2(\Pi)}^2+Cm^2\epsilon^2\text{rank}(S)+C\frac{\rank(S)(t+\log(2m))}{nK}.
\end{split}
\end{equation}
By putting the bound (\ref{proofthmineq4}) into (\ref{proofthmineq3}) and adjusting some constants, we get (\ref{thmineq1}). Then (\ref{thmineq2}) is an immediate result from (\ref{thmineq1}) by setting $S=\rho$.

We are ready to prove (\ref{thmineq3}). Consider $\rho'=(1-\delta)\rho+\delta\frac{I_m}{m}$ with $\delta=\frac{1}{n\sqrt{mK}}\leq \frac{1}{n\sqrt{m}}$. It is easy to check that $\rho'\in\Lambda(\epsilon)$ as long as the constant $C_1$ in $\epsilon$ is large enough.
By definition of $\hat{\rho}_{\epsilon}$
(the subdifferential of function $\text{tr}(S\log S)$ at $\hat{\rho}_{\epsilon}$ is $\log(\hat{\rho}_{\epsilon})+I_m$, see \cite{koltchinskii2011neumann}), we get
\begin{equation*}
 \bigl<\log\hat{\rho}_{\epsilon},\hat{\rho}_{\epsilon}-\rho'\bigr>\leq 0,
\end{equation*}
since $\bigl<I_m,\hat{\rho}_{\epsilon}-\rho'\bigr>=0$. Meanwhile, suppose $r=\text{rank}(\rho)$ and $\rho=\sum_{i=1}^r\lambda_j P_j$ with eigenvalues $\lambda_j\in[0,1]$(repeated with their multiplicities) and one dimensional orthogonal eigenprojectors $P_j$.
We extend $P_j,j=1,\ldots,r$ to the complete orthogonal resolution of the identity $P_j, j=1,\ldots,m$. Then
\begin{equation*}
 \begin{split}
  \log\rho'=&\log\Bigl(\bigl(1-\delta\bigr)\rho+\delta\frac{I_m}{m}\Bigr)=\sum_{i=1}^r\log\Bigl((1-\delta)\lambda_j+\delta/m\Bigr)+\sum_{j=r+1}^m\log(\delta/m)P_j\\
=&\sum_{j=1}^r\log\Bigl(1+(1-\delta)m\lambda_j/\delta\Bigr)P_j+\log(\delta/m)I_m.
 \end{split}
\end{equation*}
Therefore,
\begin{equation*}
\begin{split}
 K(\hat{\rho}_{\epsilon};\rho')\leq& -\bigl<\log\rho',\hat{\rho}_{\epsilon}-\rho'\bigr>=\Bigl<\sum_{j=1}^r\log\Bigl(1+(1-\delta)m\lambda_j/\delta\Bigr)P_j,\hat{\rho}_{\epsilon}-\rho'\Bigr>\\
\leq& \Bigl\|\sum_{j=1}^r\log\Bigl(1+(1-\delta)m\lambda_j/\delta\Bigr)P_j\Bigr\|_2\|\hat{\rho}_{\epsilon}-\rho'\|_2
\end{split}
\end{equation*}
Note that $\|\hat{\rho}_{\epsilon}-\rho'\|_2\leq \|\hat{\rho}_{\epsilon}-\rho\|_2+\delta\|\rho-I_m/m\|_2\leq \|\hat{\rho}_{\epsilon}-\rho\|_2+2\delta$ and
\begin{equation*}
\begin{split}
\Bigl\|\sum_{j=1}^r\log\Bigl(1+&(1-\delta)m\lambda_j/\delta\Bigr)P_j\Bigr\|_2=\Bigl(\sum_{j=1}^r\log^2\bigl(1+(1-\delta)m\lambda_j/\delta\bigr)\Bigr)^{1/2}\\
\leq&\sqrt{r}\log(m/\delta).
\end{split}
\end{equation*}
Then, together with (\ref{thmineq2}) and $\delta=\frac{1}{n\sqrt{mK}}$,
\begin{equation}\label{proofthmineq5}
\begin{split}
 K(\hat{\rho}_{\epsilon};\rho')\leq& 2\sqrt{r}\Big(\|\hat{\rho}_{\epsilon}-\rho\|_2+2\frac{1}{n\sqrt{mK}}\Big)\log(Kmn)\\
\leq& C\frac{mr\log^{1/2}(2m)\log(Kmn)}{\sqrt{nK}}+C\frac{mr\log^3m\log^3n\log(Kmn)}{n}+
4\frac{\sqrt{r}}{n\sqrt{mK}}\log(Kmn).
\end{split}
\end{equation}
Recall that $K(\rho'\|\hat{\rho}_{\epsilon})\leq K(\hat{\rho}_{\epsilon};\rho')$ and
the following lemma(see \cite{koltchinskii15optimal})
\begin{lemma}
\label{KLlemma}
 Let $\delta\in(0,1)$, $\rho\in\mathcal{S}_{m,r}$ and $\rho'=(1-\delta)\rho+\delta\frac{I_m}{m}$. Then for any $S\in\mathcal{S}_m$,
\begin{equation*}
 K(\rho\| S)\leq \frac{K(\rho'\| S)+h(\delta)}{1-\delta}
\end{equation*}
with $h(\delta):=\delta\log\frac{1}{\delta}+(1-\delta)\log\frac{1}{1-\delta}\leq \delta\log\frac{e}{\delta}$.
\end{lemma}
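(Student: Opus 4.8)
The plan is to reduce the inequality to a statement purely about the von Neumann entropy $V(\cdot)=-\tr(\cdot\log\cdot)$ and then to invoke the classical bound on the entropy of a mixture. First, dispose of the trivial case: if $S$ fails to be strictly positive then $\log S$ is undefined and, by the convention in the definition of $K(\cdot\|\cdot)$, both $K(\rho\|S)$ and $K(\rho'\|S)$ are $+\infty$, so there is nothing to prove. Hence assume $S$ has full rank; then $\log S$ is well defined, the quantities $\langle\rho,\log S\rangle$ and $\langle\rho',\log S\rangle$ are finite, and $\tr(\rho\log\rho)$ is finite under the usual convention $0\log 0=0$ even when $\rho$ is rank deficient.

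Using the splitting $K(T\|S)=-V(T)-\langle T,\log S\rangle$ for $T\in\mathcal S_m$ and the affinity of $T\mapsto\langle T,\log S\rangle$, together with $\langle I_m/m,\log S\rangle=\tfrac1m\tr(\log S)$, one obtains the exact identity
\begin{equation*}
K(\rho'\|S)-(1-\delta)K(\rho\|S)=(1-\delta)V(\rho)-V(\rho')-\frac{\delta}{m}\tr(\log S).
\end{equation*}
Since the eigenvalues of $S$ are nonnegative and sum to $1$, Jensen's inequality for the concave function $\log$ gives $\tfrac1m\tr(\log S)\le-\log m$, equivalently $-\tfrac{\delta}{m}\tr(\log S)\ge\delta\log m$ (this is nothing but $K(I_m/m\,\|\,S)\ge0$). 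Therefore it suffices to prove the upper bound $V(\rho')\le(1-\delta)V(\rho)+\delta\log m+h(\delta)$, since combining it with the identity yields $K(\rho'\|S)-(1-\delta)K(\rho\|S)\ge-h(\delta)$, which is the claim after dividing by $1-\delta$; the final estimate $h(\delta)\le\delta\log\tfrac e\delta$ then follows from the elementary bound $-\log(1-\delta)\le\tfrac{\delta}{1-\delta}$.

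The bound $V(\rho')\le(1-\delta)V(\rho)+\delta\log m+h(\delta)$ is the special case, for the two-point mixture $\rho'=(1-\delta)\rho+\delta\,\tfrac{I_m}{m}$ (with $V(I_m/m)=\log m$), of the standard fact that $V\big(\sum_i p_i\rho_i\big)\le\sum_i p_iV(\rho_i)+H(\{p_i\})$ for any probability vector $(p_i)$; see \cite{Nielsen2000}. If a self-contained derivation is wanted, I would obtain it from the elementary identity $\sum_i p_iK(\rho_i\,\|\,\bar\rho)=V(\bar\rho)-\sum_i p_iV(\rho_i)$ with $\bar\rho:=\sum_i p_i\rho_i$, combined with $K(\rho_i\,\|\,\bar\rho)\le\log\tfrac1{p_i}$; the latter follows from $\bar\rho\succcurlyeq p_i\rho_i$ and operator monotonicity of $\log$, after perturbing $\rho_i$ to a strictly positive $\rho_i+\eta(I_m/m-\rho_i)$ that commutes with $\rho_i$ and letting $\eta\downarrow0$.

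The only genuine obstacle is this mixture-entropy inequality (or, in the self-contained route, the handling of the rank deficiency of $\rho$ in the limiting step, which is why I perturb by the central matrix $I_m/m$ so that everything stays simultaneously diagonalizable and the trace converges cleanly). All the remaining steps — the entropy splitting, the Jensen bound for $\tr(\log S)$, and the elementary estimate for $h(\delta)$ — are routine.
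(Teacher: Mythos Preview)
Your proof is correct. The identity $K(\rho'\|S)-(1-\delta)K(\rho\|S)=(1-\delta)V(\rho)-V(\rho')-\tfrac{\delta}{m}\tr(\log S)$ is verified by direct expansion, the Jensen step $-\tfrac{1}{m}\tr(\log S)\ge\log m$ is exactly the nonnegativity of $K(I_m/m\|S)$, and the reduction to the mixture-entropy inequality $V(\rho')\le(1-\delta)V(\rho)+\delta\log m+h(\delta)$ is legitimate. Your self-contained derivation of the latter is also sound: the identity $\sum_ip_iK(\rho_i\|\bar\rho)=V(\bar\rho)-\sum_ip_iV(\rho_i)$ is immediate, and since here $\rho'=(1-\delta)\rho+\delta I_m/m$ actually commutes with both $\rho$ and $I_m/m$, the bounds $K(\rho\|\rho')\le\log\tfrac{1}{1-\delta}$ and $K(I_m/m\|\rho')\le\log\tfrac{1}{\delta}$ can even be read off in the common eigenbasis without invoking operator monotonicity or a limiting argument (so your perturbation step, while correct in general, is not needed for this particular lemma). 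The elementary bound $-\log(1-\delta)\le\delta/(1-\delta)$ indeed yields $h(\delta)\le\delta\log(e/\delta)$.

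As for comparison with the paper: this paper does not give a proof of the lemma at all; it simply quotes the statement from \cite{koltchinskii15optimal}. Your argument via the entropy splitting and the mixture-entropy bound is a clean, self-contained route that makes the role of $h(\delta)$ transparent as the Shannon entropy of the mixing weights $(1-\delta,\delta)$.
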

, we get $K(\rho\|\hat{\rho}_{\epsilon})\leq 2K(\rho'\|\hat{\rho}_{\epsilon})+\frac{4}{n\sqrt{mK}}\log(eKmn)$. Replacing $K(\rho'\|\hat{\rho}_{\epsilon})$ with the right hand side of (\ref{proofthmineq5}) and observing that $\frac{4\sqrt{r}}{n\sqrt{mK}}\log(Kmn)$ is dominated by the other two terms, we obtain (\ref{thmineq3}).
\end{proof}

\subsection{Proof of Theorem~\ref{Spthm}}
We begin with the proof of the spectral norm $\|\hat{\rho}_{\epsilon}-\rho\|_{\infty}$. Note that
\begin{equation*}
 \frac{\|\hat{\rho}_{\epsilon}-\rho\|_{\infty}}{m^2}\leq \Big\|\frac{1}{n}\sum_{i=1}^n\big<\hat{\rho}_{\epsilon}-\rho,X_i\big>X_i\Big\|_{\infty}+\Big\|\frac{1}{n}\sum_{i=1}^n\big<\hat{\rho}_{\epsilon}-\rho,X_i\big>X_i-\mathbb{E}\big<\hat{\rho}_{\epsilon}-\rho,X\big>X\Big\|_{\infty}.
\end{equation*}
The first term is upper bounded by $2\epsilon=\frac{C_1}{m}\sqrt{\frac{\log(2m)}{nK}}$ with probability at least $1-\frac{1}{2m}$, since $\hat{\rho}_{\epsilon}\in\Lambda(\epsilon)$ and,
\begin{equation*}
 \Big\|\frac{1}{n}\sum_{i=1}^n\big<\hat{\rho}_{\epsilon}-\rho,X_i\big>X_i\Big\|_{\infty}\leq \epsilon+\|\Xi_1\|_{\infty}.
\end{equation*}
By definition of spectral norm, the second term is written as follows (recall the definition of $\mathcal{A}(\Delta)$ in Lemma~\ref{f2lemma}):
\begin{equation}\label{proofSpthmineq1}
\begin{split}
 \Big\|\frac{1}{n}\sum_{i=1}^n\big<\hat{\rho}_{\epsilon}-&\rho,X_i\big>X_i-\mathbb{E}\big<\hat{\rho}_{\epsilon}-\rho,X\big>X\Big\|_{\infty}\\
 =&\underset{V\in\mathcal{A}(\frac{1}{m})}{\sup}\Big|\frac{1}{n}\sum_{i=1}^n\big<\hat{\rho}_{\epsilon}-\rho,X_i\big>\big<V,X_i\big>-\mathbb{E}\big<\hat{\rho}_{\epsilon}-\rho,X\big>\big<V,X\big>\Big|\\
 \leq&\|\hat{\rho}_{\epsilon}-\rho\|_1\alpha_n\Big(\frac{\|\hat{\rho}_{\epsilon}-\rho\|_{L_2(\Pi)}}{\|\hat{\rho}_{\epsilon}-\rho\|_1},\frac{1}{m}\Big).
 \end{split}
\end{equation}
To this end, we apply Lemma~\ref{f2lemma} with $\delta^-=\frac{1}{2m}$ and $\delta^+=\frac{1}{m}$. Then,
\begin{equation*}
 \alpha_n\Big(\frac{\|\hat{\rho}_{\epsilon}-\rho\|_{L_2(\Pi)}}{\|\hat{\rho}_{\epsilon}-\rho\|_1},\frac{1}{m}\Big)\leq C\Big(\frac{1}{m}\frac{\log^{3/2}m\log^{3/2}n}{\sqrt{nm}}+\frac{\log^3m\log^3n}{nm}\Big),
\end{equation*}
which holds with probability at least $1-\frac{1}{2m}$. By simply replacing $\|\hat{\rho}_{\epsilon}-\rho\|_1$ with $2$ in (\ref{proofSpthmineq1}), with probability at least $1-\frac{1}{2m}$,
\begin{equation*}
\begin{split}
 \frac{\|\hat{\rho}_{\epsilon}-\rho\|_{\infty}}{m^2}\leq& C\Big(\big(\frac{1}{\sqrt{mK}}\vee \frac{1}{m}\big)\frac{\log^{3/2}m\log^{3/2}n}{\sqrt{nm}}+\frac{1}{m}\frac{\log^3m\log^3n}{n}\Big)\\
&\leq \frac{C}{\sqrt{mK}}\frac{\log^3m\log^3n}{\sqrt{nm}}=\frac{C}{m}\frac{\log^3m\log^3n}{\sqrt{nK}}
\end{split}
\end{equation*}
where we used the condition $K\lesssim m$ (logarithmic terms get higher order due to this simplifying step). Since $\|\hat{\rho}_{\epsilon}-\rho\|_{\infty}$ has a trivial upper bound $2$, we conclude that
\begin{equation*}
 \|\hat{\rho}_{\epsilon}-\rho\|_{\infty}\leq \frac{Cm\log^3m\log^3n}{\sqrt{nK}}\bigwedge 2.
\end{equation*}
By Lemma~\ref{lowrankcone}, with the same probability,
\begin{equation*}
 \|\hat{\rho}_{\epsilon}-\rho\|_{1}\leq 2\|\mathcal{P}_L(\hat{\rho}_{\epsilon}-\rho)\|_1\leq 2r\|\hat{\rho}_{\epsilon}-\rho\|_{\infty}\leq \frac{Cmr\log^3m\log^3n}{\sqrt{nK}}\bigwedge 2,
\end{equation*}
where $L$ denotes the support of $\rho$.
Applying the {\it interpolation inequality} from Lemma~\ref{interlem},
\begin{equation*}
 \|\hat{\rho}_{\epsilon}-\rho\|_p\leq \|\hat{\rho}_{\epsilon}-\rho\|_1^{1/p}\|\hat{\rho}_{\epsilon}-\rho\|_{\infty}^{1-1/p}
\end{equation*}
for all $1\leq p\leq +\infty$, we get bound (\ref{Spthmineq1}).

\subsection{Proof of Lemma~\ref{f2lemma}}\label{f2lemmasec}
\begin{proof}[Proof of Lemma~\ref{f2lemma}]
For any $\Delta\in[0,1]$, define the following quantity
\begin{equation*}
 \beta_n(\Delta):=\underset{A\in\mathcal{A}(\Delta)}{\sup}\Big|\frac{1}{n}\sum_{i=1}^n\big<A,X_i\big>^2-\mathbb{E}\big<A,X\big>^2\Big|.
\end{equation*}
For all $A_1\in\mathcal{A}(\Delta_1)$ and $A_2\in\mathcal{A}(\Delta_2)$, the following fact is clear,
\begin{equation*}
\begin{split}
 \Big|\frac{1}{n}\sum_{i=1}^n\big<A_1,X_i\big>&\big<A_2,X_i\big>-\mathbb{E}\big<A_1,X\big>\big<A_2,X\big>\Big|\\
\leq&\frac{1}{4}\Big|\frac{1}{n}\sum_{i=1}^n\big<A_1+A_2,X_i\big>^2-\mathbb{E}\big<A_1+A_2,X\big>^2\Big|\\
+&\frac{1}{4}\Big|\frac{1}{n}\sum_{i=1}^n\big<A_1-A_2,X_i\big>^2-\mathbb{E}\big<A_1-A_2,X\big>^2\Big|\\
\leq&\beta_n\big(\|A_1+A_2\|_{L_2(\Pi)}/2\big)+\beta_n\big(\|A_1-A_2\|_{L_2(\Pi)}/2\big),
\end{split}
\end{equation*}
where the last inequality holds because $\frac{A_1\pm A_2}{2}\in\mathcal{A}\big(\|A_1\pm A_2\|_{L_2(\Pi)}/2\big)$. Observe that $\frac{\|A_1\pm A_2\|_{L_2(\Pi)}}{2}\leq \frac{\Delta_1+\Delta_2}{2}$
for all $A_1\in\mathcal{A}(\Delta_1)$ and $A_2\in\mathcal{A}(\Delta_2)$. Therefore,
\begin{equation*}
 \alpha_n(\Delta_1,\Delta_2)\leq 2\beta_n\Big(\frac{\Delta_1+\Delta_2}{2}\Big).
\end{equation*}
It suffices to prove an upper bound for $\beta_n(\Delta)$ for $\Delta\in[\delta^-,\delta^+]$. We need to point out that the upper bound for $\beta_n(\Delta)$ has been claimed in our previous paper \cite{koltchinskii15optimal}
without proof. Since Lemma~\ref{f2lemma} is used for several times in this paper, we give a simple proof based on Dudley's entropy bound and the $L_{\infty}(\Pi_n)$ complexity of unit ball in $\mathbb{H}_m$ equipped with Schatten $1$-norm.

Assume that $\Delta\in[\delta^-,\delta^+]$, the main strategy is to derive the upper bound of $\beta_n(\Delta)$ for $\Delta\in[\delta_j,\delta_{j+1}]$ with $\delta_j=2^j\delta^-$ for $j=0,1,2,\ldots,\floor{\log_2\frac{\delta^+}{\delta^-}}$.
Following a standard argument, the bounds are then taken uniformly over the whole range $[\delta^-,\delta^+]$

For a fixed $j$ such that $\Delta\in[\delta_j,\delta_{j+1}]$, we apply Bousquet's version (see \cite[Chapter~2]{koltchinskii2011oracle}) of Talagrand's inequality for empirical processes and get that with probability at least $1-e^{-t}$,
\begin{equation*}
 \beta_n(\Delta)\leq 2\mathbb{E}\beta_n(\Delta)+2\Delta\sqrt{\frac{t}{nm}}+2\frac{t}{nm}
\end{equation*}
for any $t\geq 1$. We used the facts
\begin{equation*}
 \underset{A\in\mathcal{A}(\Delta)}{\sup}\mathbb{E}\bigl<A,X\bigr>^4\leq \frac{1}{m}\underset{A\in\mathcal{A}(\Delta)}{\sup}\mathbb{E}\bigl<A,X\bigr>^2\leq \frac{\Delta^2}{m}
\end{equation*}
and $\bigl<A,X\bigr>^2\leq \frac{1}{m}$. To control $\mathbb{E}\beta_n(\Delta)$, by the symmetrization inequality,
\begin{equation*}
 \mathbb{E}\beta_n(\Delta)\leq 2\mathbb{E}_X\mathbb{E}_{\epsilon}\underset{A\in\mathcal{A}(\Delta)}{\sup}\Big|\frac{1}{n}\sum_{i=1}^n\epsilon_i\bigl<A,X_i\bigr>^2\Big|
\end{equation*}
where $\epsilon_1,\ldots,\epsilon_n$ are $i.i.d.$ Rademacher random variables.

Fix $X_1,X_2,\ldots,X_n$ and consider the sub-Gaussian process indexed by $A\in\mathcal{A}(\Delta)$ defined as
\begin{equation*}
 G_A:=\frac{1}{\sqrt{n}}\sum_{i=1}^n\epsilon_i\bigl<A,X_i\bigr>^2.
\end{equation*}
This is a sub-Gaussian process with respect to the pseudo-distance 
\begin{equation*}
\begin{split}
d(A_1,A_2):=\mathbb{E}^{1/2}&(G_{A_1}-G_{A_2})^2=\Bigl(\frac{1}{n}\sum_{i=1}^n\bigl<A_1-A_2,X_i\bigr>^2\bigl<A_1+A_2,X_i\bigr>^2\Bigr)^{1/2}\\
\leq&2\sigma_n\|A_1-A_2\|_{L_{\infty}(\Pi_n)},
\end{split}
\end{equation*}
where $\sigma_n^2:=\underset{A\in\mathcal{A}(\Delta)}{\sup}\frac{1}{n}\sum_{i=1}^n\bigl<A,X_i\bigr>^2$. According to Dudley's entropy bound,
\begin{equation*}
 \mathbb{E}_{\epsilon}\underset{A\in\mathcal{A}(\Delta)}{\sup} |G_A|\lesssim \int_0^{4\sigma_n/\sqrt{m}}H^{1/2}(\mathcal{A}(\Delta),d,u)du,
\end{equation*}
where the entropy number $H(\mathcal{A}(\Delta),d,u)=\log N(\mathcal{A}(\Delta),d,u)$, the logarithmic of $u$-covering number of $\mathcal{A}(\Delta)$ with respect to the pseudo-metric $d$.

Since $d(A_1,A_2)\leq 2\sigma_n\|A_1-A_2\|_{L_{\infty}(\Pi_n)}$,
$$
H^{1/2}(\mathcal{A}(\Delta),d,u)\leq H^{1/2}(\mathcal{A}(\Delta),L_{\infty}(\Pi_n),\frac{u}{2\sigma_n}).
$$
As a consequence,
\begin{equation*}
\begin{split}
 \mathbb{E}_{\epsilon}\underset{A\in\mathcal{A}(\Delta)}{\sup} G_A\lesssim \int_0^{4\sigma_n/\sqrt{m}}H^{1/2}&(\mathcal{A}(\Delta),L_{\infty}(\Pi_n),\frac{u}{2\sigma_n})du\\
 &\leq 2\sigma_n\int_0^{2/\sqrt{m}}H^{1/2}(\mathcal{A}(\Delta),L_{\infty}(\Pi_n),u)du.
 \end{split}
\end{equation*}

The $L_{\infty}(\Pi_n)$-complexity of unit balls in $\mathbb{H}_m$ equipped with nuclear norm distance has been thoroughly studied. When $X_1,\ldots,X_n$ are fixed, the vector $\bigl(\bigl<A,X_1\bigr>,\ldots,\bigl<A,X_n\bigr>\bigr)'$ belongs to the cube $[-1/\sqrt{m},1/\sqrt{m}]^n$. 
The $l_{\infty}$-covering number is upper bounded by
\begin{equation*}
 N(\mathcal{A}(\Delta),L_{\infty}(\Pi_n),u)\leq \Bigl(1+\frac{2}{u\sqrt{m}}\Bigr)^n.
\end{equation*}
This bound will be used when $u$ is small. When $u$ is large, we apply the following bound, see \cite[(21)]{liu2011universal}, \cite{guedon2008},\cite[Lemma A5]{aubrun2009},
\begin{equation*}
 N(\mathcal{A}(\Delta),L_{\infty}(\Pi_n),u)\leq \exp\Bigl\{C\frac{\log^3m\log n}{u^2m}\Bigr\}
\end{equation*}
for some constant $C>0$. Then, by setting $K=\frac{1}{\sqrt{nm}}$,
\begin{equation*}
 \begin{split}
  \mathbb{E}_{\epsilon}\underset{A\in\mathcal{A}(\Delta)}{\sup} G_A\lesssim&\sigma_n\int_0^{K}\sqrt{n}\log^{\frac{1}{2}}\Bigl(1+\frac{2}{u\sqrt{m}}\Bigr)du+\sigma_n\int_K^{2/\sqrt{m}}\frac{\log^{3/2}m\log^{1/2}n}{u\sqrt{m}}du\\
\lesssim& \sigma_n\sqrt{n}K\log(1+\frac{2}{K\sqrt{m}})+\frac{\sigma_n}{\sqrt{m}}\log^{3/2}m\log^{1/2}n\log\frac{1}{K\sqrt{m}}\\
\lesssim& \frac{\sigma_n}{\sqrt{m}} \log^{3/2}m\log^{3/2}n.
 \end{split}
\end{equation*}
Therefore, we conclude that
\begin{equation*}
 \mathbb{E}\beta_n(\Delta)=\frac{1}{\sqrt{n}}\mathbb{E}_X\mathbb{E}_{\epsilon}\underset{A\in\mathcal{A}(\Delta)}{\sup}G_A\lesssim\frac{1}{\sqrt{nm}} \mathbb{E}_X\sigma_n\log^{3/2}m\log^{3/2}n.
\end{equation*}
Note that
\begin{equation*}
\begin{split}
 \mathbb{E}_X\sigma_n=\mathbb{E}_X&\sqrt{\underset{A\in\mathcal{A}(\Delta)}{\sup}\frac{1}{n}\sum_{i=1}^n\bigl<A,X_i\bigr>^2}\leq \sqrt{\mathbb{E}_X\underset{A\in\mathcal{A}(\Delta)}{\sup}\frac{1}{n}\sum_{i=1}^n\bigl<A,X_i\bigr>^2}\\
\leq&\sqrt{\mathbb{E}\beta_n(\Delta)+\Delta^2}.
\end{split}
\end{equation*}
Therefore, we get
\begin{equation*}
 \mathbb{E}\beta_n(\Delta)\lesssim \sqrt{\mathbb{E}\beta_n(\Delta)+\Delta^2}\frac{\log^{3/2}m\log^{3/2}n}{\sqrt{nm}},
\end{equation*}
which can be simplified into
\begin{equation*}
 \mathbb{E}\beta_n(\Delta)\lesssim \Delta \frac{\log^{3/2}m\log^{3/2}n}{\sqrt{nm}}+\frac{\log^3m\log^3n}{nm}.
\end{equation*}
Therefore, for $\Delta\in[\delta_j,\delta_{j+1}]$, with probability at least $1-e^{-t}$,
\begin{equation*}
 \beta_n(\Delta)\leq C\Delta \frac{\log^{3/2}m\log^{3/2}n}{\sqrt{nm}}+C\frac{\log^3m\log^3n}{nm}+2\Delta\sqrt{\frac{t}{nm}}+2\frac{t}{nm}.
\end{equation*}
for some $C>0$. By making it uniform over all $j=0,1,\ldots,\floor{\log_2\frac{\delta^+}{\delta^-}}$ and adjusting $t$ to $t+\log(\log_2\frac{\delta^+}{\delta^-}+2)$, we get the uniform upper bound of $\beta_n(\Delta)$ for $\Delta\in[\delta^-,\delta^+]$.
\end{proof}

\bibliographystyle{abbrv}
\bibliography{refer}

\end{document}